\newcommand{\red}[1]{{\color{red}#1}}
\newcommand{\blue}[1]{{\color{blue}#1}}
\definecolor{darkgreen}{rgb}{0.0, 0.5, 0.0}
\definecolor{outsidepink}{rgb}{0.9, 0.3, 0.3}
\definecolor{insideblue}{rgb}{0.4, 0.5, 0.9}
\theoremstyle{definition}
\newtheorem{theorem}{Theorem}[section]
\newtheorem{lemma}[theorem]{Lemma}
\newtheorem{definition}[theorem]{Definition}
\definecolor{iccvblue}{rgb}{0.21,0.49,0.74}
\title{DMesh++: An Efficient Differentiable Mesh for Complex Shapes}
\author{Sanghyun Son$^{*}$\\
University of Maryland\\
{\tt\small shh1295@umd.edu}
\and
Matheus Gadelha\\
Adobe Research\\
{\tt\small gadelha@adobe.com}
\and
Yang Zhou\\
Adobe Research\\
{\tt\small yazhou@adobe.com}
\and
Matthew Fisher\\
Adobe Research\\
{\tt\small matfishe@adobe.com}
\and
Zexiang Xu\\
Adobe Research\\
{\tt\small zexu@adobe.com}
\and
Yi-Ling Qiao\\
University of Maryland\\
{\tt\small yilingq@umd.edu}
\and
Ming C. Lin\\
University of Maryland\\
{\tt\small lin@umd.edu}
\and
Yi Zhou\\
Adobe Research\\
{\tt\small yizho@adobe.com}
}
\begin{document}
\twocolumn[{%
\renewcommand\twocolumn[1][]{#1}%
\maketitle
\begin{center}
    \centering
    \captionsetup{type=figure}
    \includegraphics[width=0.95\textwidth]{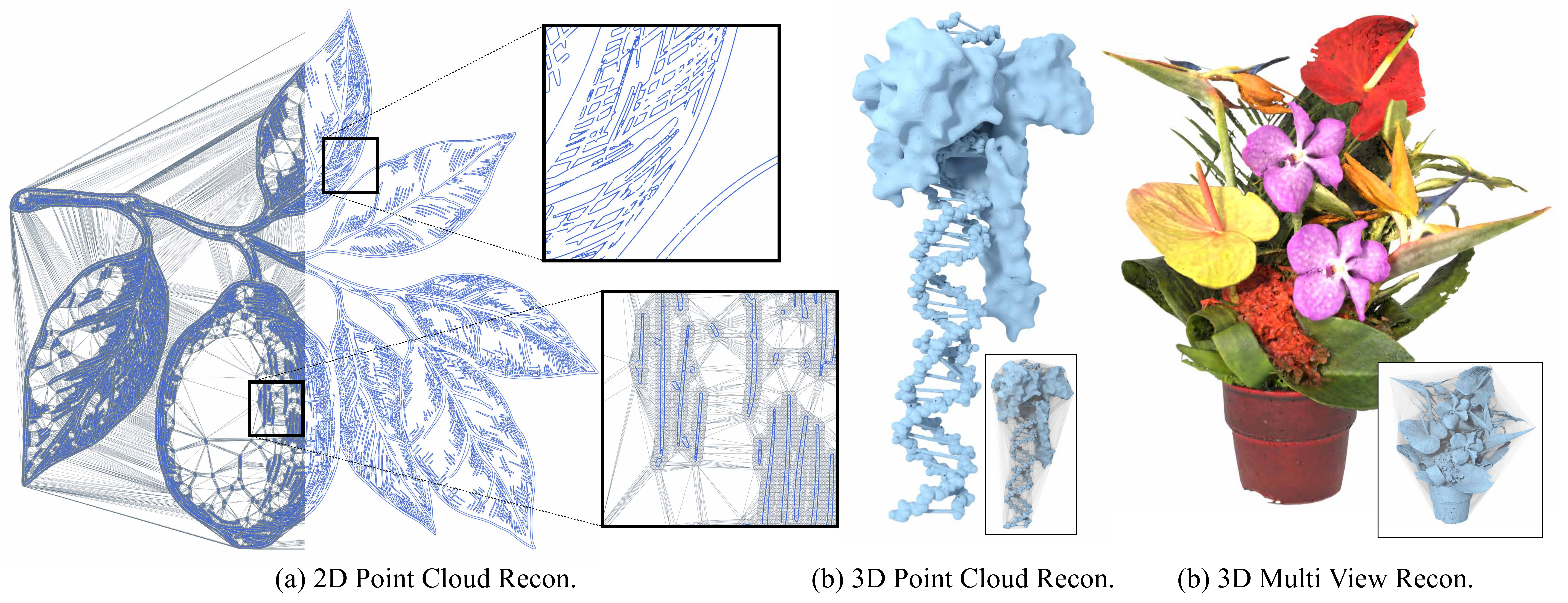}
    \captionof{figure}{\textbf{DMesh++ for complex 2D and 3D shapes.} DMesh++ encodes all geometric and topological information into continuous point features. (a) By optimizing these point features, DMesh++ is able to reconstruct complex 2D drawings from sample points. (b) This approach is also applicable to 3D, where it reconstructs the complex geometric structure of DNA from a point cloud. (c) By incorporating additional color features, DMesh++ can reconstruct complex, colored 3D shapes from multi-view images. For each result, the \textit{“imaginary”} part is rendered in gray, while the \textit{“real”} part—which defines the final mesh—is rendered in other colors. (\cref{sec:preliminary}).}
    \label{fig:teaser}
\end{center}%
}]

\begin{abstract}
Recent probabilistic methods for 3D triangular meshes capture diverse shapes by differentiable mesh connectivity, but face high computational costs with increased shape details. We introduce a new differentiable mesh processing method that addresses this challenge and efficiently handles meshes with intricate structures. Our method reduces time complexity from $O(N)$ to $O(\log N)$ and requires significantly less memory than previous approaches. Building on this innovation, we present a reconstruction algorithm capable of generating complex 2D and 3D shapes from point clouds or multi-view images. Visit our \href{https://sonsang.github.io/dmesh2-project}{project page} for source code and supplementary material.
\end{abstract}

\vspace{-1em}
{\scriptsize               
\setlength{\parskip}{0pt}  
\noindent                  
* This work was mainly done during internship at Adobe Research and continued as a collaborative effort with UMD.\\
** The paper was last modified on Jul.\,6, 2025.\par
}
\normalsize

\section{Introduction}
\label{sec:intro}

Among various possible shape representations, a mesh is often favored for a wide range of downstream tasks due to its efficiency, versatility, and controllability. A mesh is defined by its vertices' position and their connectivity in the form of edges and faces. This connectivity is discrete in nature, and also the number of possible connectivities grows exponentially with the number of points, which prevents meshes from being differentiable shape representations (\cref{fig:mesh_dmesh}). To address this, recent data-driven efforts have attempted to predict mesh connectivity using Transformer-based models~\citep{siddiqui2024meshgpt, chen2024meshanything, chen2024meshanything2, shen2024spacemesh}. However, these methods face inherent challenges with robustness to outlier meshes, potential self-intersections, and high computational costs.

On another route, Son et al.~\citep{son2024dmesh} introduced a new form of differentiable mesh called DMesh, which is essentially a probabilistic approach. For a given set of points, they explicitly compute probabilities for possible face combinations to exist on the mesh based on the continuous point-wise features. This approach minimizes several mesh degeneracies, and is free from outliers, as it is not data-driven (\cref{fig:mesh_dmesh}). Therefore, this probabilistic approach opens up a new venue to adopt meshes in a machine learning pipeline, such as generative models~\citep{wei2024meshlrm, zhang2024clay}. However, it suffers from excessive computational cost when the number of points increases (\cref{fig:minball-speed}), which limits its applicability for representing complex shapes with detailed structures.

In this work, we introduce DMesh++, which overcomes the computational limitations of DMesh while retaining its core advantages. To that end, we present \emph{Minimum-Ball} algorithm. While the computational cost to evaluate face probability is $O(N)$ for DMesh, where $N$ is the number of points that define the mesh, our \emph{Minimum-Ball} algorithm has $O(\log N)$ computational cost (\cref{sec:min-ball,fig:minball-speed}). 

The direct application of DMesh++ is a reconstruction task. It effectively reconstructs complex 2D and 3D meshes of diverse topology from point clouds or multi-view images (\cref{fig:teaser,fig:3d-pc-qual,fig:mv-qual}). During the optimization of continuous point-wise features, we observe dynamic topological changes in the mesh that recover the target shape (\cref{fig:tsne}).

\begin{figure}
    \centering
    \includegraphics[width=1.0\linewidth]{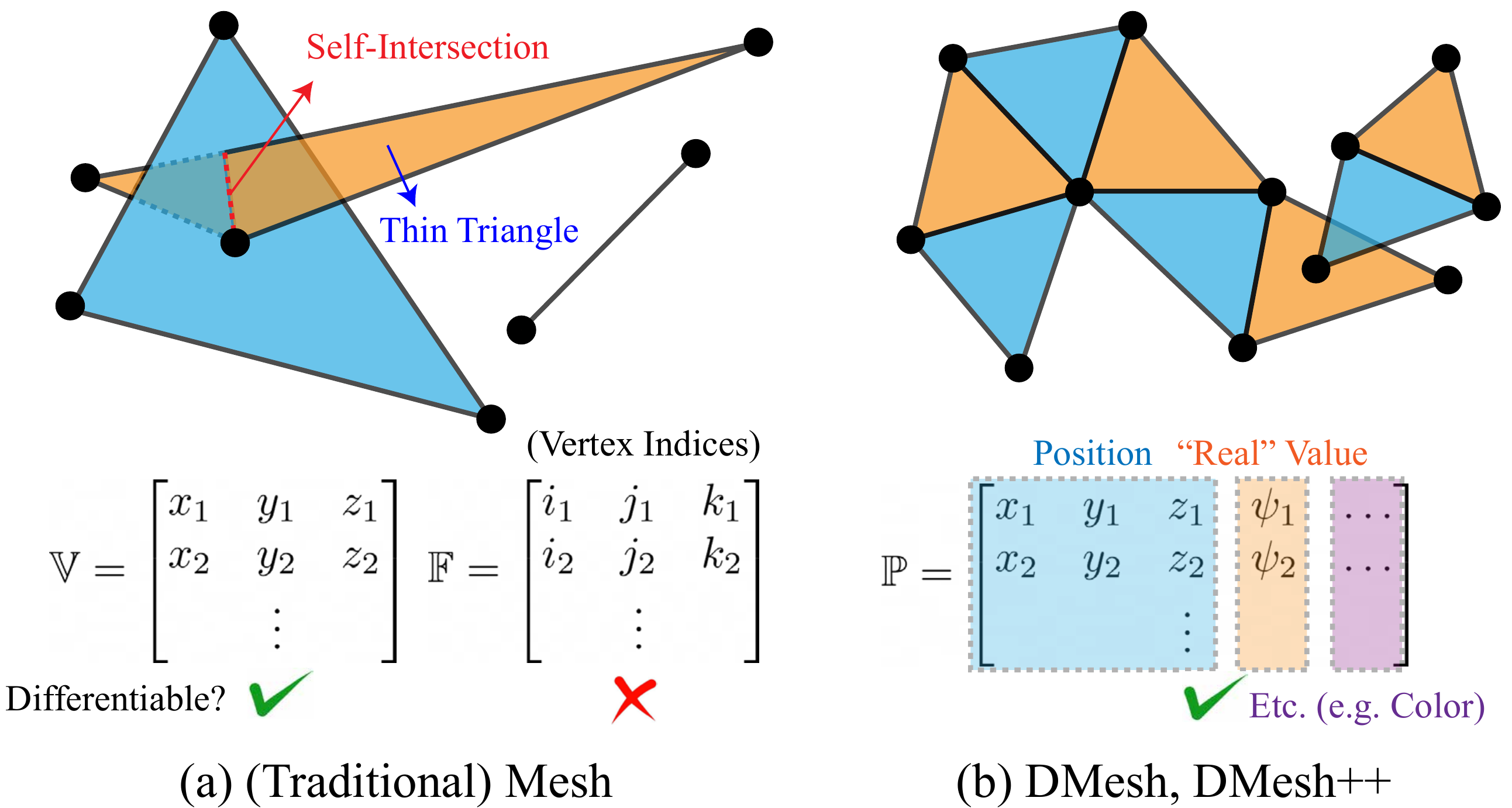}
    \caption{\textbf{Conceptual comparison of traditional mesh and variants of DMesh~\citep{son2024dmesh}.} Traditional meshes employ a non-differentiable, discrete data structure, $\mathbb{F}$, to store vertex indices that define connectivity, whereas DMesh++ encodes connectivity and additional information into continuous point-wise features, $\mathbb{P}$. Mesh generated from DMesh++ avoids several degeneracies—such as self-intersections and thin triangles—that can compromise its suitability for downstream applications.}
    \label{fig:mesh_dmesh}
    \vspace{-1em}
\end{figure}

To summarize, our contributions are the following:
\begin{itemize} 
    \item We present DMesh++, an enhanced version of DMesh~\citep{son2024dmesh}, which overcomes its computational bottlenecks by employing the \textit{Minimum-Ball} algorithm.
    \item We propose a reconstruction algorithm that incorporates efficient loss formulations and additional mesh operations to effectively recover 2D and 3D shapes from point clouds or multi-view images.
    \item We validate our approach on 500+ mesh models with diverse topology, which are collected from Thingi10K~\citep{zhou2016thingi10k} and Objaverse~\citep{deitke2023objaverse} dataset. 
\end{itemize}

\section{Related Work}

While meshes offer an efficient and flexible representation of shapes, they are mainly constrained by their connectivity issues, which limit their applicability in machine learning. To address these challenges, shape inference in machine learning has evolved through three stages.

\vspace{4pt}
\noindent\textbf{Using Alternative Differentiable Shape Representations.}
Rather than handling mesh directly, some prior work extract mesh from alternative differentiable shape representations. Neural implicit representations, like (un)signed distance fields~\citep{park2019deepsdf, yariv2021volsdf, wang2021neus, wang2023neus2, Oechsle2021ICCV, wei2023neumanifold, liu2023neudf, long2023neuraludf, yu2023surf}, encode distance fields in neural networks, and use iso-surface extraction algorithms~\citep{lorensen1998marching, ju2002dual, guillard2022meshudf} to generate the final mesh. Another method encodes distance directly into spatial points and applies differentiable iso-surface extraction~\citep{liao2018deep, shen2021deep, shen2023flexible, wei2023neumanifold, munkberg2022extracting, liu2023ghost, mehta2022level}. While often more efficient, these methods typically cannot handle open surfaces; though \cite{liu2023ghost} does, it cannot represent non-orientable geometries. Gaussian Splatting~\citep{kerbl20233d} also encodes visual data as spatial “splats” but lacks the geometric accuracy of implicit functions.

\begin{figure}
    \centering
    \includegraphics[width=1.0\linewidth]{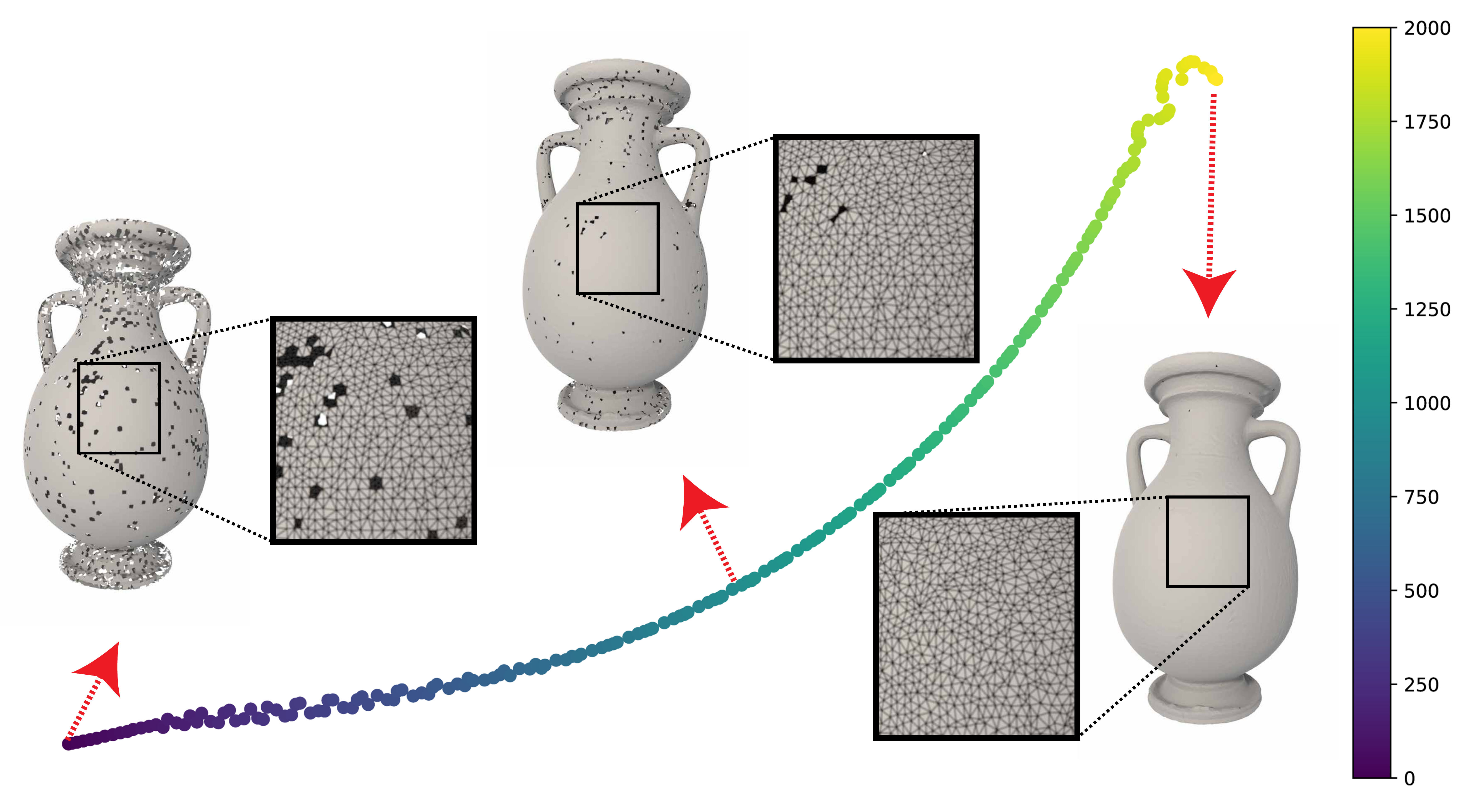}
    \caption{\textbf{Mesh topology change during 3D point cloud reconstruction of a vase.} We visualize the evolution of point-wise features by projecting them onto a 2-dimensional space using T-SNE~\citep{van2008visualizing} at each optimization step (ranging from 0 to 2K). As these features continuously evolve, the mesh undergoes discrete topological changes to progressively recover the target shape.}
    \label{fig:tsne}
    \vspace{-1em}
\end{figure}

\vspace{2pt}
\noindent\textbf{Inferring Meshes Differentiably.}
The main challenge in differentiable mesh handling is the exponential growth of possible vertex connections as vertex count increases. To simplify this challenge, most prior works assume fixed registration and permit only local connectivity changes~\citep{zhou2020fully, chen2019learning, nicolet2021large, liu2019soft, laine2020modular, palfinger2022continuous}. Recently, data-driven approaches have aimed to overcome these limitations by training generative models~\citep{siddiqui2024meshgpt, chen2024meshanything, chen2024meshanything2, shen2024spacemesh} that predict vertex connectivity from point clouds. Specifically, SpaceMesh~\citep{shen2024spacemesh} ensures combinatorial manifold mesh generation. However, these models struggle with outliers and self-intersections.


\vspace{4pt}
\noindent\textbf{Designing A Differentiable Form of Mesh.}
Son et al.~\citep{son2024dmesh} recently introduced DMesh, a differentiable mesh formulation using a probabilistic approach. DMesh augments each point with two continuous values, along with its position (\cref{fig:mesh_dmesh}), and applies a ``tessellation'' function in \cref{eq:tessellation-dmesh} to deterministically generate a mesh from a point set. This method adapts to various geometric topologies, including non-orientable open surfaces, and avoids self-intersections. With DMesh, optimizing or inferring only point-wise features is sufficient to generate the mesh.



However, DMesh's tessellation function is slow due to its reliance on Weighted Delaunay Triangulation (WDT), which has a practical time complexity of $O(N)$ for $N$ points using the CGAL package~\citep{cgal:pt-t3-23b}. For $N=100K$ in 3D, the runtime can reach up to 800 milliseconds (\cref{fig:minball-speed}), limiting DMesh's applicability for complex shapes requiring finer detail. Moreover, it is hardly possible to accelerate WDT using parallelization, because of its inherent race conditions. Therefore, in this work, we eliminate WDT, and propose a more efficient differentiable mesh formulation.

\section{Formulation}
In this section, we first provide the high-level formulation for computing probability of a face to exist in the mesh. Then, we introduce \emph{Minimum-Ball} (\cref{sec:min-ball}), which is our primary algorithm.

\begin{figure}[t]
    \centering
    \begin{subfigure}[b]{0.49\linewidth}
        \centering
        \includegraphics[width=\textwidth]{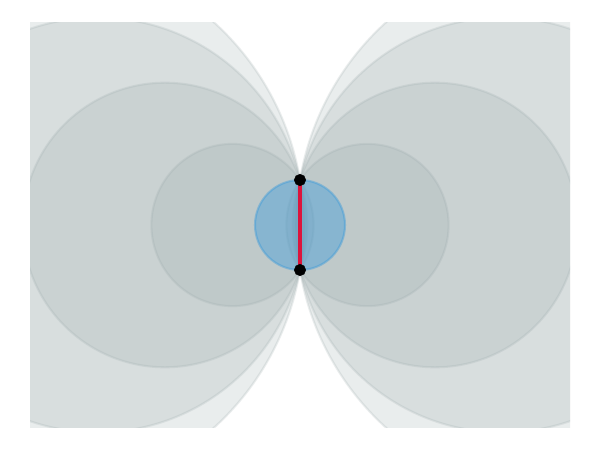}
    \end{subfigure}
    \hfill
    \begin{subfigure}[b]{0.49\linewidth}
        \centering
        \includegraphics[width=\textwidth]{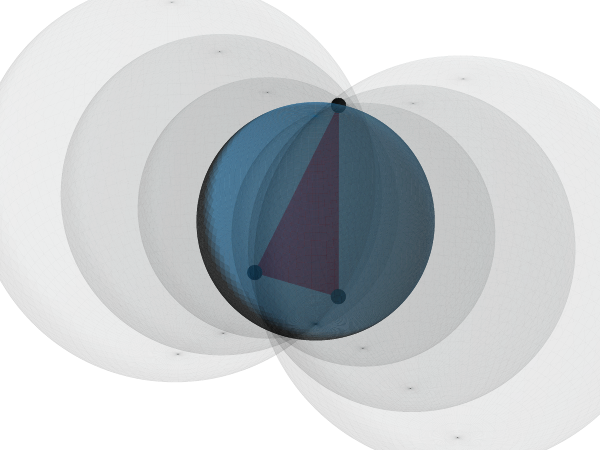}
    \end{subfigure}
    \caption{\textbf{Bounding balls of a face $F$ (red) in 2D (left) and 3D (right).} The minimum bounding ball ($B_F$) is rendered in blue, while the others are rendered in gray.}
    \label{fig:minball}
    \vspace{-1.5em}
\end{figure}

\subsection{Preliminary}
\label{sec:preliminary}

In this work, we refer to a $(d-1)$-simplex in $d$-dimensional space as a ``face'' (e.g., a line segment for $d=2$ or a triangle for $d=3$). DMesh~\citep{son2024dmesh} tessellates $d$-dimensional convex space using faces, with the actual surface on ``real'' faces and ``imaginary'' faces enclosing the ``real'' part to support the convex space (\cref{fig:teaser}).

In DMesh, each point is a $(d+2)$-dimensional vector: the first $d$ values denote position, while the remaining two represent the Weighted Delaunay Triangulation (WDT) weight ($w$)~\citep{aurenhammer1987power} and real value ($\psi$). The $\psi \in [0, 1]$ of a point indicates whether it lies on the shape, specifically if $\Psi(p) > 0.5$, where $\Psi(p)$ is $\psi$ of a point $p$.

For a point set $\mathbb{P}$, let $\mathbb{F}_{wdt}$ represent the faces in WDT of $\mathbb{P}$. DMesh then introduces a \textbf{``tessellation''} function to determine if a face $F$ exists on the mesh:
\begin{equation}
\label{eq:tessellation-dmesh}
    T_{DMesh}(\mathbb{P}, F) = (F \in \mathbb{F}_{wdt}) \wedge (\min_{p\in F}\Psi(p) > 0.5).
\end{equation}

DMesh++ introduces an alternative tessellation function for faster processing. By removing the need for WDT, we eliminate the WDT weight and represent each point as a $(d+1)$-dimensional vector, $(x_1, ..., x_d, \psi)$\footnote{Points could carry additional features, such as color (\cref{fig:mesh_dmesh}).}. In place of WDT, we implement a faster scheme called the \emph{Minimum-Ball} condition (Definition~\ref{def:min-ball}) for defining the tessellation function. Letting $\mathbb{F}_{min}$ represent the set of faces that meet this condition, we define the tessellation function as
\begin{equation}
\label{eq:tessellation_ours}
    T_{DMesh++}(\mathbb{P}, F) = (F \in \mathbb{F}_{min}) \wedge (\min_{p \in F}\Psi(p) > 0.5).
\end{equation}


In our differentiable framework, we compute probability of $F$ to satisfy these two conditions: $\Lambda_{min}$ and $\Lambda_{real}$, respectively. Then, we compute the final probability of $F$ to exist on the mesh as $\Lambda(F) = \Lambda_{min}(F) \times \Lambda_{real}(F)$. For $\Lambda_{real}$, we use differentiable $\min$ operator as DMesh. In the next section, we explain how we define $\Lambda_{min}$.

\begin{figure}[t]
    \centering
    \includegraphics[width=\linewidth]{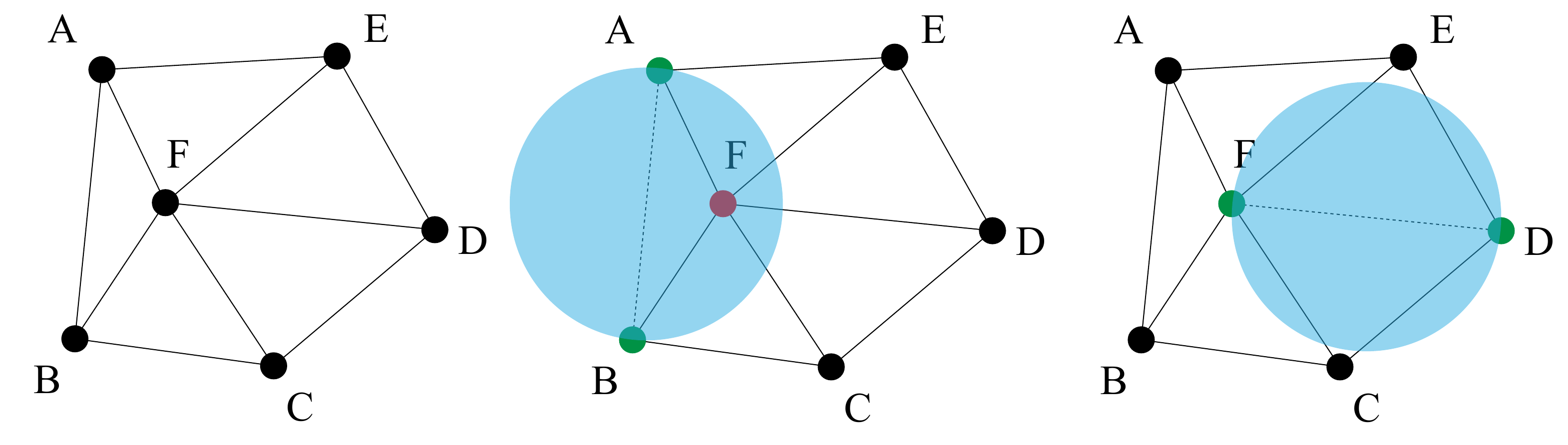}
    \caption{\textbf{Minimum-Ball condition in 2D.} In the left, 2D Delaunay Triangulation (DT) of 6 points is given. In middle and right figure, we render $B_{F}$ for two faces ($\overline{AB}$, $\overline{DF}$) in blue.}
    \label{fig:minball-point}
    \vspace{-1.5em}
\end{figure}

\subsection{Minimum-Ball Algorithm}
\label{sec:min-ball}

\begin{figure*}
    \centering
    \includegraphics[width=1.0\textwidth]{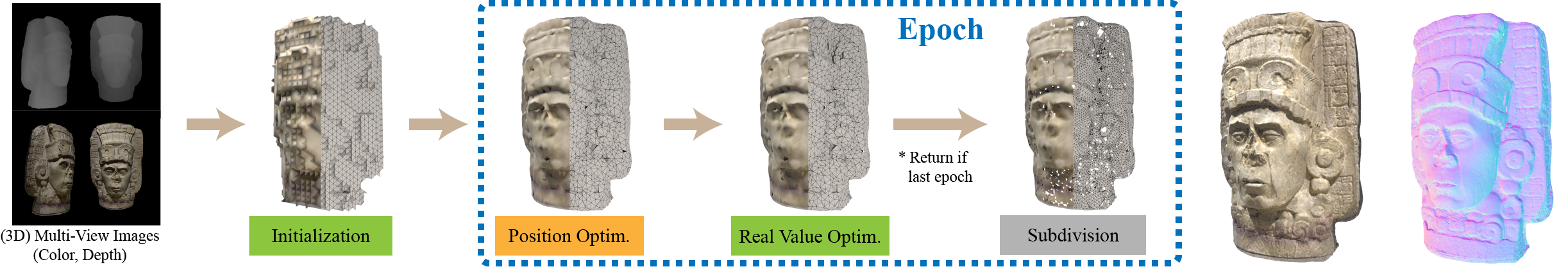}
    \caption{\textbf{Reconstruction process for 3D multi-view colored images of a sculpture.} In each stage, we optimize different per-point features: the \textcolor{Dandelion}{position} and the \textcolor{LimeGreen}{real ($\psi$)}, while the per-point color is refined at every stage. (Left) We display the meshes at each stage during the first epoch. (Right) We provide a rendering of the final mesh along with its view-point normal, which reveals the fine mesh details.}
    \label{fig:pipeline}
    \vspace{-1em}
\end{figure*}

The mesh generated by DMesh's tessellation function in~\cref{eq:tessellation-dmesh} is free from self-intersections because $\mathbb{F}_{wdt}$ itself is free from them. Additionally, it minimizes the occurrence of thin triangles—undesirable in many downstream tasks such as physics simulations—thanks to the properties of the WDT. However, computing the tessellation function is computationally expensive, as it requires calculating the WDT to define $\mathbb{F}_{wdt}$. In designing our \textit{Minimum-Ball}, we aimed to eliminate this computational bottleneck while preserving the favorable properties related to self-intersection avoidance and triangle quality. In the following, we demonstrate that \textit{Minimum-Ball} satisfies these requirements.

For a given set of points $\mathbb{P} \in \mathbb{R}^{d}$ and a face $F = \{p_1, p_2, ..., p_{d}\} \subset \mathbb{P}$, we define a bounding ball of $F$ as a $d$-dimensional ball that goes through every point of $F$. Note that this bounding ball is not unique, but there exists a unique minimum bounding ball, which has the minimum radius among every bounding ball. We name it as $B_{F}$ (\cref{fig:minball}). Then, we define $\mathbb{F}_{min}$ as a set of faces whose minimum bounding ball does not contain any other point in $\mathbb{P}$.
\begin{definition}
\label{def:min-ball}
    $F \in \mathbb{F}_{min}$ if and only if there is no point in $\mathbb{P}$ that lies (strictly) inside $B_{F}$. 
\end{definition}

Note that we can ignore points in $F$, as they are located on the boundary of $B_{F}$. In~\cref{fig:minball-point}, we render a 2D case, where $\overline{AB}$ does not satisfy this definition because of $F$. In contrast, $\overline{DF}$ satisfies this condition. Then, we can observe that $\mathbb{F}_{min}$ is a subset of faces in Delaunay Triangulation (DT) of $\mathbb{P}$ ($\mathbb{F}_{dt}$).
\begin{lemma}
\label{lemma:min-ball-dt}
    $F \in \mathbb{F}_{min} \Rightarrow F \in \mathbb{F}_{dt}$.
\end{lemma}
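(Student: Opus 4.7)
The plan is to deduce this directly from the classical \emph{empty circumsphere} characterization of Delaunay simplices. Recall that a $(d-1)$-simplex $F$ with vertices in $\mathbb{P} \subset \mathbb{R}^d$ belongs to $\mathbb{F}_{dt}$ if and only if there exists \emph{some} $d$-dimensional ball that passes through all $d$ vertices of $F$ and whose open interior is disjoint from $\mathbb{P}$. (For a full-dimensional $d$-simplex this ball is unique, but for a $(d-1)$-face the circumscribing ball is only required to exist; the family of such balls forms a one-parameter pencil.) I would state this characterization at the top of the proof and, if needed, cite a standard reference on DT.

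Given that, the proof collapses to one observation: the minimum bounding ball $B_F$ is, by construction, one of the $d$-dimensional balls that passes through every vertex of $F$. The hypothesis $F \in \mathbb{F}_{min}$ says precisely that no point of $\mathbb{P}$ lies strictly inside $B_F$. Hence $B_F$ is an empty circumscribing ball for $F$, and the empty circumsphere criterion is satisfied, so $F \in \mathbb{F}_{dt}$. I would write this as a single short paragraph following the recalled characterization.

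The only subtle points worth a sentence are (i) noting that vertices of $F$ are allowed to sit on the boundary of $B_F$ since the empty condition is on the open interior, which is consistent with Definition~\ref{def:min-ball} where ``strictly inside'' excludes the boundary; and (ii) a brief remark on general position, pointing out that if more than $d+1$ cospherical points occur the Delaunay complex is defined up to a tie-breaking rule but the implication still holds since the emptiness of $B_F$'s interior is unaffected by such degeneracies. I do not anticipate any real obstacle: the lemma is essentially a restatement of the Delaunay criterion once one recognizes that $B_F$ qualifies as a witness circumsphere, so the ``hardest'' part is merely making the one-line reduction to the empty circumsphere property crisp and unambiguous.
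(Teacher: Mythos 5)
Your proposal is correct and matches the paper's proof exactly: both invoke the empty-circumscribing-ball characterization of Delaunay faces and observe that $B_F$, being empty by the hypothesis $F \in \mathbb{F}_{min}$, serves as the required witness. The extra remarks on boundary points and general position are fine but not needed beyond what the paper states.
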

\begin{proof}
    By definition, a face $F$ is in $\mathbb{F}_{dt}$ if there is a bounding ball of $F$ that does not contain any other point in $\mathbb{P}$~\citep{cheng2013delaunay}. If the face $F$ is in $\mathbb{F}_{min}$, its minimum bounding ball satisfies this condition. Thus $F$ is in $\mathbb{F}_{dt}$.
\end{proof}

Note that $\mathbb{F}_{dt}$ is also free from self-intersections as $\mathbb{F}_{wdt}$, and thus is $\mathbb{F}_{min}$. Furthermore, it inherently minimizes the number of thin triangles, as guaranteed by DT. However, note that $\mathbb{F}_{min}$ does not necessarily tessellate the entire convex shape, as there could be faces in $\mathbb{F}_{dt}$ that are not in $\mathbb{F}_{min}$ (\eg $\overline{AB}$ in \cref{fig:minball-point}). 

Now, based on Definition~\ref{def:min-ball}, we can check if $F$ is in $\mathbb{F}_{min}$. Let us denote the center and radius of $B_{F}$ as $B^{c}_{F} \in \mathbb{R}^{d}$ and $B^{r}_{F}$. We can compute these values in a differentiable way (Appendix~\ref{appendix:min-ball-computation}). Then, we can compute the signed distance between $B_{F}$ and $\mathbb{P}$ as follows:
\begin{align}
    d(B_F, \mathbb{P}) &= \min_{p \in \mathbb{P} - F} ||p - B^{c}_{F}|| - B^{r}_{F}.
\end{align}

As shown above, we can easily find $d(B_{F}, \mathbb{P})$ by finding the nearest point of $B^{c}_{F}$ in $\mathbb{P} - F$. Using this signed distance, we can check if $F$ is in $\mathbb{F}_{min}$ as follows.
\begin{align}
    F \in \mathbb{F}_{min} \Leftrightarrow d(B_{F}, \mathbb{P}) > 0.
\end{align}

Then, we define $\Lambda_{min}$ with sigmoid function as
\begin{equation}
\label{eq:min-ball}
    \Lambda_{min}(F) = \sigma(d(B_{F}, \mathbb{P}) \cdot \alpha_{min}),
\end{equation}
where $\alpha_{min}$ is a constant (Appendix~\ref{appendix:min-ball-sigmoid-coef}).

With this formulation, we can evaluate~\cref{eq:tessellation_ours} far more efficiently than~\cref{eq:tessellation-dmesh}. Our method relies on a highly parallelizable nearest neighbor search algorithm\footnote{We used implementation of PyTorch3D~\citep{ravi2020accelerating}.}, unlike the sequential WDT. While WDT has a practical time complexity of $O(|\mathbb{P}|)$, it is relatively slow. In contrast, our approach has a time complexity of $O(|F| \cdot \log|\mathbb{P}|)$, where $|F|$ is the number of query faces to evaluate. However, by parallelizing the nearest neighbor search across query faces, especially on GPU, this complexity effectively reduces to $O(\log|\mathbb{P}|)$\footnote{We assume that $|F|$ does not increase exponentially, which is a practical assumption as query faces are often determined by local proximity.}. This allows our tessellation function to run up to 32 times faster in 3D than DMesh~\citep{son2024dmesh} (\cref{fig:minball-speed}). For optimization tasks like reconstruction, we further accelerate by periodically caching nearest neighbors for each query face (Appendix~\ref{appendix:nn-caching}). We provide formal algorithm in Appendix~\ref{appendix:min-ball-algo}.
\section{Reconstruction Process}
\label{sec:recon-process}

The goal of reconstruction is to optimize point-wise features so that the resulting mesh aligns with the input observation. As shown in~\cref{fig:tsne}, the discrete mesh topology dynamically changes during the optimization of continuous features to better fit the given input.

\cref{fig:pipeline} provides an overview of our reconstruction process for recovering a 3D colored mesh from multi-view images. In the first stage, we initialize per-point features. If a point cloud is available, we use it to obtain a better starting point (see the initial mesh in~\cref{fig:tsne}); otherwise, we initialize with a regular tetrahedral grid. Next, we optimize the point positions while keeping the real values fixed, followed by optimizing the real values while fixing the point positions. In these two stages, we minimize loss functions tailored to each input modality (e.g., Chamfer Distance loss for point clouds, rendering loss for images). To increase mesh complexity and capture finer details, we subdivide the mesh by inserting additional points. This process is iterated for a fixed number of epochs. Finally, to accelerate the overall process and enhance the final mesh quality, we introduce several innovations at each stage. Detailed explanations are provided in Appendix~\ref{appendix:recon-detail}, due to lack of space.

\begin{figure}[t]
    \centering
    \includegraphics[width=1.0\linewidth]{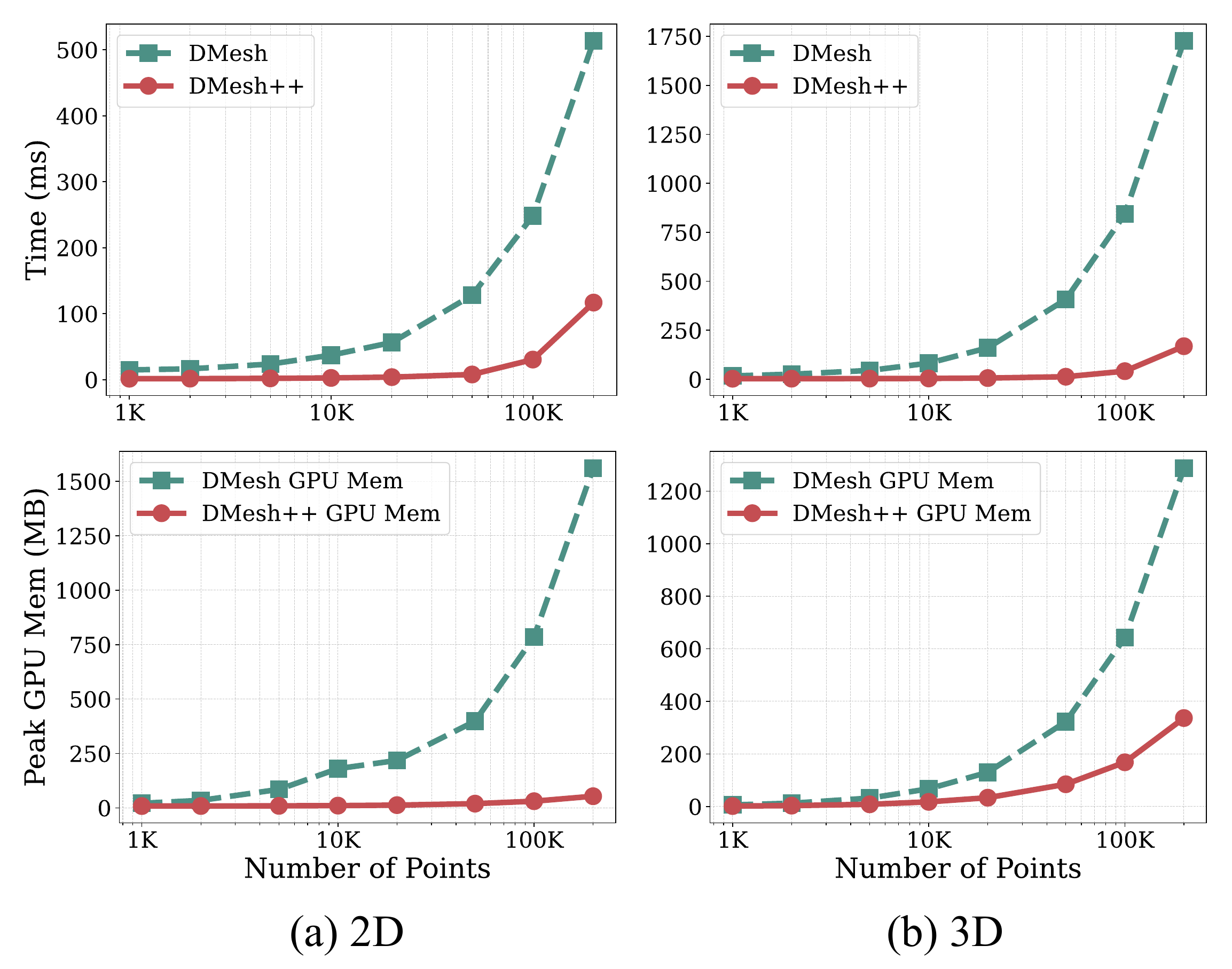}
    \vspace{-2em}
    \caption{\textbf{Comparison of tessellation cost.} Our method computes face probabilities up to 16 times faster in 2D and 32 times faster in 3D than DMesh~\citep{son2024dmesh}, while using up to 96\% less GPU memory in 2D and 75\% less in 3D.}
    \label{fig:minball-speed}
    \vspace{-1.5em}
\end{figure}

\section{Experiments}
\label{sec:experiments}

This section presents our experimental results. First, we evaluate how \textit{Minimum-Ball} enhances the computational efficiency of the tessellation function in both 2D and 3D. Next, we demonstrate the results of a point cloud reconstruction task in 2D and 3D as a practical application. Finally, we showcase an application in 3D multi-view reconstruction. Together, these results illustrate that our method is well-suited for downstream tasks involving complex shapes. Our main algorithms are implemented in PyTorch~\citep{paszke2017automatic} and CUDA~\citep{nickolls2008scalable}. All experiments were conducted on a system with an AMD EPYC 7R32 CPU and an NVIDIA A10 GPU. For details, refer to Appendix~\ref{appendix:exp-details}.

\subsection{Tessellation Cost}
\label{sec:exp-tess-speed}

We compare the computational cost of the tessellation function for DMesh~\citep{son2024dmesh} (\cref{eq:tessellation-dmesh}) and our DMesh++ (\cref{eq:tessellation_ours}). For both 2D and 3D scenarios, we randomly generate $N$ points within a unit cube, find each point's 10 nearest neighbors, and use these proximities to form potential face combinations. From these, we randomly select $N$ faces as query faces for the tessellation function. For each value of $N$ (ranging from $1K$ to $200K$ to reflect practical scenarios), we conducted 5 trials and averaged the computational costs.

In~\cref{fig:minball-speed}, we compare the computational costs of DMesh and DMesh++. In terms of speed, DMesh's performance scales linearly with the number of points in both 2D and 3D due to its sequential WDT algorithm. In contrast, DMesh++ exhibits sub-linear scaling up to 50K points, benefiting from GPU parallelization (\cref{sec:min-ball}). Beyond this range, computational costs increase more sharply because of GPU thread limitations; however, DMesh++ still processed 200K points in $117ms$ for 2D and $168ms$ for 3D. Regarding GPU memory usage, both methods scale linearly, but DMesh++ uses significantly less memory since it stores only the additional information related to the minimum balls, whereas DMesh must store all details of the power diagram on the GPU\footnote{Note that the 2D version of DMesh requires more memory than the 3D version, as the 2D implementation is not as optimized using CUDA.}.

These results demonstrate that the \textit{Minimum-Ball} algorithm significantly enhances tessellation efficiency, enabling the effective handling of complex shapes.

\subsection{Reconstruction Tasks}
\label{sec:exp-recon-task}

\begin{figure}[t]
    \centering
    \includegraphics[width=0.99\linewidth]{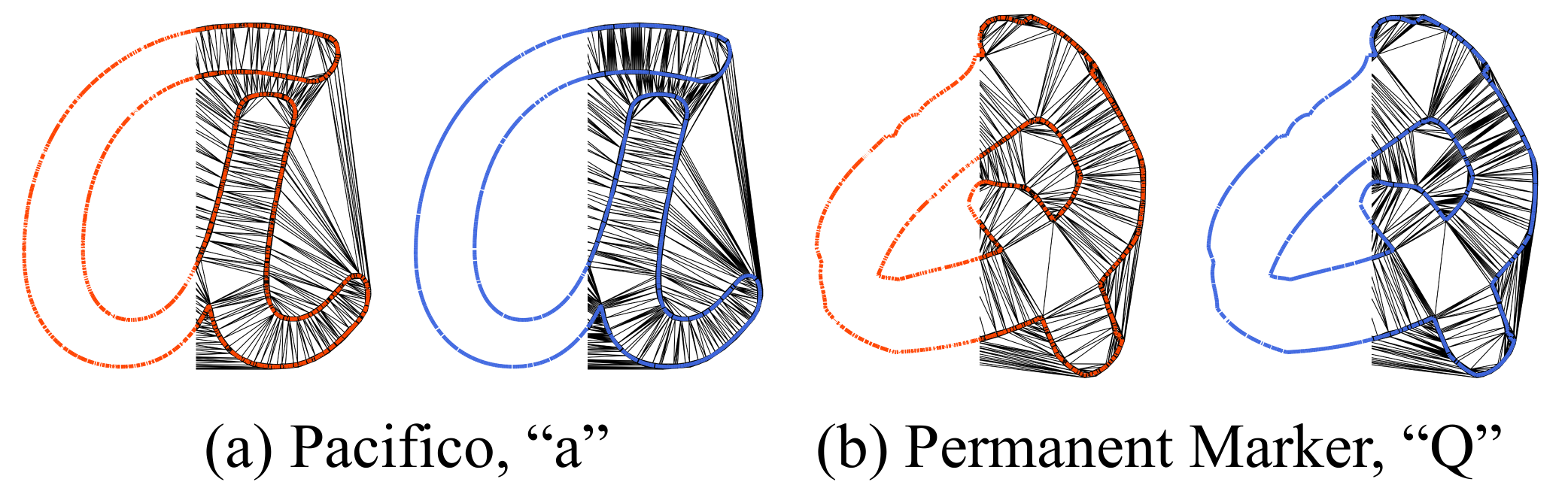}
    \vspace{-0.5em}
    \caption{\textbf{Qualitative comparison of 2D point cloud reconstruction results.} The outputs of DMesh~\citep{son2024dmesh} and DMesh++ are rendered in \red{red} and \blue{blue}, respectively.}
    \label{fig:font-qualitative}
    \vspace{-0.5em}
\end{figure}

\begin{table}[t]
    \centering
    \scalebox{0.7}{
  \begin{tabular}{l|cccc}
    \toprule
    Method & CD{\footnotesize ($\times 10^{-6}$)}$\downarrow$ & \# Verts. & \# Edges. & Time (sec)\\
    \midrule
DMesh~\cite{son2024dmesh} & 1.97 & 2506 & 2245 & 30.39 \\
    \midrule
DMesh++ & 1.82 & 2862 & 2793 & 11.33 \\
    \bottomrule
  \end{tabular}
  }
  \caption{\textbf{Quantitative comparison of the 2D point cloud reconstruction task for the font dataset.} DMesh++ reconstructs 2D meshes with greater accuracy and efficiency than DMesh.} 
\label{tab:font-quantitative}
\vspace{-1.5em}
\end{table}

\begin{table*}[t!]
    \centering
    \scalebox{0.8}{
  \begin{tabular}{l|ccccc|cccc|ccc}
    \toprule
     & \multicolumn{5}{c|}{Geometric Accuacy}  & \multicolumn{4}{c|}{Mesh Quality} & \multicolumn{3}{c}{Statistics}\\
    \midrule
    Method & CD{\footnotesize ($\times 10^{-3}$)}$\downarrow$ &  F1$\uparrow$ &  NC$\uparrow$ &  ECD$\downarrow$ &  EF1$\uparrow$ & AR$\downarrow$ & SI$\downarrow$ & NME$\downarrow$ & NMV$\downarrow$ & \# Verts. & \# Faces. & Time (sec)\\
    \midrule
    \midrule
VoroMesh~\cite{maruani2023voromesh}
  & 19.591 
  & 0.352
  & 0.855
  & \colorbox{GreenYellow}{0.054}
  & 0.072
  & 145.7
  & 0
  & \colorbox{GreenYellow}{0.001} 
  & \colorbox{GreenYellow}{0} 
  & 64561 
  & 129338
  & 11 \\
\midrule
DMesh++ 
  & \colorbox{GreenYellow}{0.034}
  & \colorbox{GreenYellow}{0.471}
  & \colorbox{GreenYellow}{0.919}
  & 0.063
  & \colorbox{GreenYellow}{0.094}
  & \colorbox{GreenYellow}{1.765}
  & 0
  & 0.130 
  & 0.003
  & 25415
  & 58537
  & 282 \\ 
  \midrule
  \midrule
PSR~\cite{kazhdan2013screened} 
  & 10.164
  & 0.392
  & \colorbox{GreenYellow}{0.943}
  & 0.302
  & 0.026
  & 5.218
  & 0
  & \colorbox{GreenYellow}{0} 
  & \colorbox{GreenYellow}{0}  
  & 139857 
  & 279739
  & 4 \\
PoNQ~\cite{maruani2024ponq}
  & 1.578
  & 0.402
  & 0.934
  & \colorbox{GreenYellow}{0.056}
  & 0.090
  & 2.288
  & 0
  & 0.002 
  & \colorbox{GreenYellow}{0} 
  & 47254 
  & 94664
  & 32 \\
DMesh~\cite{son2024dmesh} 
  & 0.154 
  & 0.289
  & 0.921
  & 0.077
  & 0.069
  & 1.961
  & 0
  & 0.103 
  & {0.002}
  & 5815
  & 13088
  & 1147 \\
\midrule
DMesh++
  & \colorbox{GreenYellow}{0.033} 
  & \colorbox{GreenYellow}{0.480} 
  & {0.938}
  & 0.060
  & \colorbox{GreenYellow}{0.116}
  & \colorbox{GreenYellow}{1.814}
  & 0
  & 0.087 
  & 0.004
  & 25396
  & 55546
  & 282
  \\
\bottomrule
  \end{tabular}
  }
    \caption{\textbf{Quantitative comparison of 3D point cloud reconstruction results over 50 manually chosen models from Thingi10K~\citep{zhou2016thingi10k} and Objaverse~\citep{deitke2023objaverse}.} We highlight the \colorbox{GreenYellow}{best} results for each metric. The upper panel compares methods that use \textit{unoriented} point clouds (VoroMesh, DMesh++), while the lower panel displays methods that use \textit{oriented} point clouds (PSR, PoNQ, DMesh, DMesh++).}
    \label{tab:3d-pc-quant}
    \vspace{-1.5em}
\end{table*}

\paragraph{Dataset.} 

For the 2D point cloud reconstruction task, we used vector graphics of 26 letters from four different font styles downloaded from the Google Fonts service\footnote{\url{https://fonts.google.com/}}. Additionally, we employed six vector graphic images representing complex drawings from Adobe Stock\footnote{\url{https://stock.adobe.com/}}.

For the 3D reconstruction tasks, we randomly selected 500 models from Thingi10K~\citep{zhou2016thingi10k} dataset, half of which are closed-surface models and the other half are open-surface models. Also, we manually chose 20 and 30 models from Thingi10K and Objaverse~\citep{deitke2023objaverse} dataset for better diversity. We normalized the model scale to $1$ before reconstruction.

\paragraph{Metrics.} 

For 2D results, we report the Chamfer Distance (CD) of the reconstructed outputs for quantitative comparison. For 3D, we assess geometric accuracy using five metrics—CD (Chamfer Distance), F1 (F1 score), NC (Normal Consistency), ECD (Edge Chamfer Distance), and EF1 (Edge F1 score)—following~\citep{chen2022neural}. Additionally, we evaluate mesh quality using four metrics: AR (Aspect Ratio), SI (Self-Intersection ratio), NME (Non-Manifold Edge ratio), and NMV (Non-Manifold Vertex ratio). Finally, we provide statistics on mesh complexity (e.g., vertex and face counts) along with the reconstruction time.

\subsubsection{2D Point Cloud Reconstruction}
\label{sec:exp-2d-pc}

\begin{figure}[t]
    \centering
    \includegraphics[width=\linewidth]{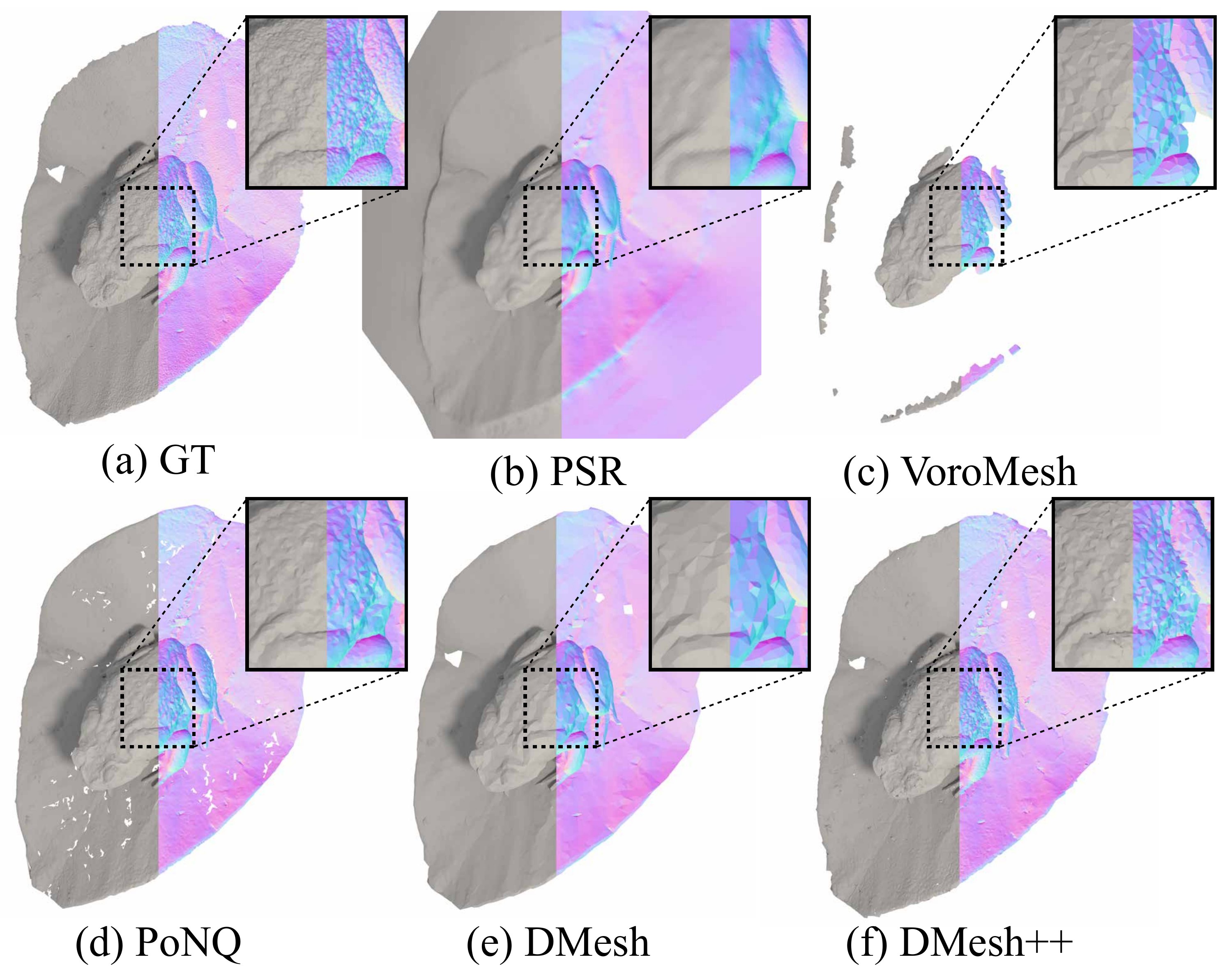}
    \caption{\textbf{Qualitative comparison of 3D point cloud reconstruction results for a toad sitting on a leaf.} For each result, we render its diffuse image on the left, and view-point normals on the right.}
    \label{fig:3d-pc-qual}
    \vspace{-1.5em}
\end{figure}

In this task, we aim to reconstruct a 2D mesh from a 2D point cloud to demonstrate that the DMesh++ formulation is applicable in 2D. We primarily compare our reconstruction results on a font dataset with those obtained by DMesh~\citep{son2024dmesh}, whose formulation is also easily extendable to 2D. As input, we sampled 1,000 points from each spline curve composing the font and downsampled the entire point cloud using a grid with a cell size of 0.005. 

In~\cref{tab:font-quantitative}, we present a quantitative comparison of the reconstruction results. We observe that DMesh++ reconstructs 2D meshes more faithfully than DMesh in terms of CD loss, while also running $2.6\times$ faster. In~\cref{fig:font-qualitative}, we render the reconstructed 2D meshes for two examples. The results show that DMesh++ produces significantly fewer holes compared to DMesh, which is consistent with the CD loss comparison. However, some of these holes are inevitable, as we mainly reconstruct our shape with CD loss and there are places that lack points.

Additionally, we reconstructed several complex 2D drawings from point clouds to demonstrate the computational efficiency of DMesh++. Since these drawings contain finer details than the fonts, we downsampled the entire point cloud using a grid with a cell size of $0.001$. When we attempted to reconstruct these drawings with DMesh, the GPU memory consumption became prohibitively high, resulting in an error during reconstruction. Therefore, for qualitative evaluation, we report only the results of DMesh++ in~\cref{fig:teaser} and Appendix~\ref{appendix:2d-drawing-recon}.

Before moving on, we'd like to introduce an experimental algorithm that is applicable to 2D mesh optimization called the \textit{Reinforce-Ball} algorithm, which could be used for producing an efficient mesh that adapts to local geometry. Please see Appendix~\ref{appendix:rl-ball} for more details.

\begin{table}[t]
    \centering
    \scalebox{0.68}{
  \begin{tabular}{l|ccccc}
    \toprule
    Method & CD{\footnotesize ($\times 10^{-4}$)}$\downarrow$ & F1$\uparrow$ & NC$\uparrow$ & ECD$\downarrow$ & EF1$\uparrow$\\
    \midrule
PSR~\cite{kazhdan2013screened} & 12.1 / 16.3 & 0.47 / 0.45 & \red{0.97} / \red{0.96} & 0.45 / 0.37 & 0.01 / 0.01 \\
PoNQ~\cite{maruani2024ponq} & \blue{2.44} / 7.86 & \blue{0.48} / \blue{0.45} & \blue{0.97} / 0.95 & \red{0.01} / 0.04 & \red{0.26} / \blue{0.22} \\
DMesh~\cite{son2024dmesh} & 2.82 / \blue{3.29} & 0.25 / 0.22 & 0.94 / 0.93 & \blue{0.01} / \red{0.01} & 0.15 / 0.12 \\
    \midrule
DMesh++ & \red{0.37} / \red{0.37} & \red{0.48} / \red{0.47} & 0.96 / \blue{0.95} & 0.02 / \blue{0.02} & \blue{0.25} / \red{0.23} \\
    \bottomrule
  \end{tabular}
  }
  \caption{\textbf{Quantitative comparison of 3D point cloud reconstruction results over 500 randomly chosen models from Thingi10K~\citep{zhou2016thingi10k}.} The results over closed surfaces and open surfaces are shown together: (closed / open). We highlight the best results for two different categories with \red{red} and \blue{blue}, respectively.} 
\label{tab:pc-quantitative-thingi}
\vspace{-1.5em}
\end{table}

\subsubsection{3D Point Cloud Reconstruction}

\begin{figure*}
    \centering
    \includegraphics[width=\linewidth]{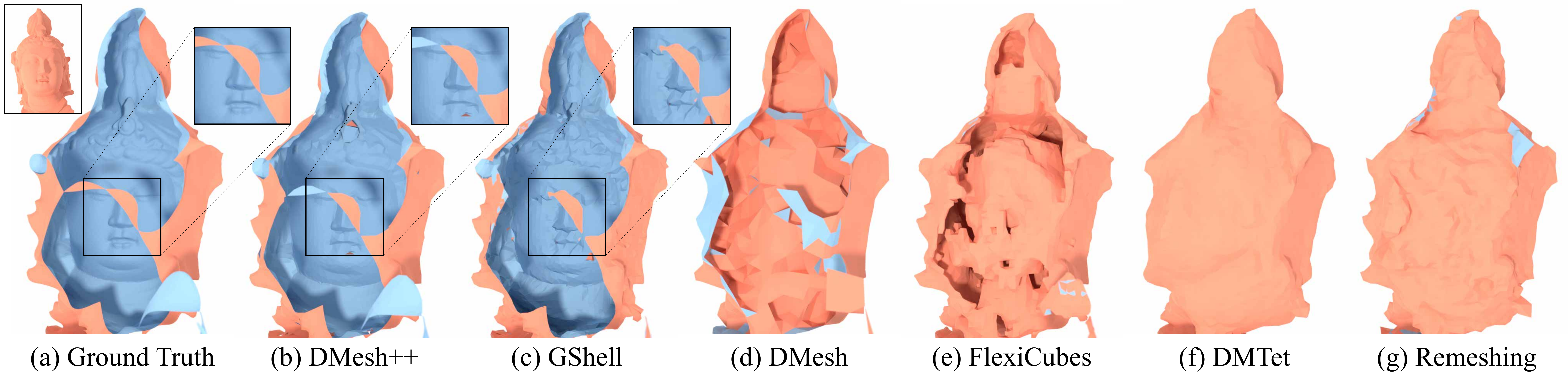}
    \caption{\textbf{Qualitative comparison of 3D multi-view reconstruction results for open surface.} Here we illustrate from back of an open surface model (the front view is rendered at the left top of (a)). Colors represent {\color{insideblue} inside} and {\color{outsidepink} outside} facing surfaces. 
    }
    \label{fig:mv-qual}
    \vspace{-1em}
\end{figure*}

\begin{table*}[t!]
    \centering
    \scalebox{0.80}{
  \begin{tabular}{l|ccccc|cccc|ccc}
    \toprule
     & \multicolumn{5}{c|}{Geometric Accuacy}  & \multicolumn{4}{c|}{Mesh Quality} & \multicolumn{3}{c}{Statistics}\\
    \midrule
    Method & CD{\footnotesize ($\times 10^{-3}$)}$\downarrow$ &  F1$\uparrow$ &  NC$\uparrow$ &  ECD$\downarrow$ &  EF1$\uparrow$ & AR$\downarrow$ & SI$\downarrow$ & NME$\downarrow$ & NMV$\downarrow$ & \# Verts. & \# Faces. & Time (sec)\\
    \midrule
Remeshing~\cite{palfinger2022continuous} & 0.977 & \colorbox{GreenYellow}{0.343} & \colorbox{GreenYellow}{0.907} & \colorbox{GreenYellow}{0.088} & 0.044 & \colorbox{Lavender}{1.562} & 0.220 & \colorbox{Lavender}{0} & \colorbox{Lavender}{0} & 13273 & 26540 & 25 \\
DMTet~\cite{shen2021deep} & 1.395 & 0.191 & 0.868 & 0.145 & 0.032 & 6.175 &\colorbox{Lavender}{0} & \colorbox{Lavender}{0} & \colorbox{Lavender}{0} & 20549 & 41131 & 201 \\
FlexiCubes~\cite{shen2023flexible} & 3.493 & 0.290 & 0.880 & 0.091 & 0.038 & 2.093 & \colorbox{GreenYellow}{0.011} & \colorbox{Lavender}{0} & 0.005 & 14811 & 28882 & 88 \\
GShell~\cite{liu2023ghost} & 5.807 & 0.326 & 0.904 & 0.116 & \colorbox{GreenYellow}{0.047} & 2.793 & 0.047 & \colorbox{Lavender}{0} & 0.004 & 14587 & 28054 & 209 \\
DMesh~\cite{son2024dmesh} & \colorbox{GreenYellow}{0.697} & 0.328 & 0.898 & 0.104 & 0.045 & 1.820 & \colorbox{Lavender}{0} & 0.042 & \colorbox{GreenYellow}{0.002} & 2461 & 5058 & 772 \\
\midrule
DMesh++ & \colorbox{Lavender}{0.342} & \colorbox{Lavender}{0.360} & \colorbox{Lavender}{0.923} & \colorbox{Lavender}{0.074} & \colorbox{Lavender}{0.059} & \colorbox{GreenYellow}{1.639} & \colorbox{Lavender}{0} & \colorbox{GreenYellow}{0.025} & 0.036 & 12507 & 23727 & 205 \\
\bottomrule
\end{tabular}
  }
    \caption{\textbf{Quantitative comparison of 3D multi-view reconstruction results over 50 manually chosen models from Thingi10K~\citep{zhou2016thingi10k} and Objaverse~\citep{deitke2023objaverse}.} We highlight the \colorbox{Lavender}{best} results and the \colorbox{GreenYellow}{second best} results for each evaluation metric.}
    \label{tab:mv-quant}
    \vspace{-1.5em}
\end{table*}

In this task, we reconstruct a 3D mesh from a dense 3D point cloud. As input, we sampled 200K points from the ground truth mesh using the Poisson disk sampling algorithm~\citep{bridson2007fast} implemented in MeshLab~\citep{cignoni2008meshlab}. For comparison, we employed other widely used optimization-based point cloud reconstruction methods, including Screened Poisson Surface Reconstruction (PSR)\citep{kazhdan2013screened}, VoroMesh\citep{maruani2023voromesh}, PoNQ~\citep{maruani2024ponq}, and DMesh~\citep{son2024dmesh}. For PSR, DMesh, and PoNQ, we used oriented point clouds for reconstruction, while for DMesh++ we tested both unoriented and oriented point clouds. We used the PSR implementation available in MeshLab. For DMesh, we used its default settings; for VoroMesh and PoNQ, we employed a grid size of 128 and optimized for 1000 epochs to achieve the best results.

We report the quantitative comparisons in~\cref{tab:3d-pc-quant,tab:pc-quantitative-thingi}. In~\cref{tab:3d-pc-quant}, results are averaged over the 50 manually chosen models and split into two categories: methods using \textit{unoriented} versus \textit{oriented} point clouds. In both categories, DMesh++ performs best or is comparable to other methods in geometric accuracy and triangle aspect ratio. Compared to DMesh, DMesh++ outperforms all metrics while handling 4.2$\times$ more faces with a 76\% reduction in computation time. Furthermore, DMesh++ achieves significantly better CD, F1, and EF1 scores than PSR, VoroMesh, and PoNQ, which struggle with open surfaces. Separate evaluations in~\cref{tab:pc-quantitative-thingi} over 500 randomly chosen models confirm that DMesh++ largely outperforms on open surfaces and performs comparably or better on closed surfaces.

These quantitative results align with the qualitative comparison in~\cref{fig:3d-pc-qual}, where a toad sitting on a thin leaf is reconstructed . While PSR and VoroMesh fail to reconstruct the open surface of the leaf and PoNQ produces many holes, DMesh captures the overall geometry but misses fine details. DMesh++ successfully recovers both the overall shape and the intricate details. For qualitative comparison on a closed surface, see~\cref{fig:3d-pc-qual-closed}.


\subsubsection{3D Multi-View Reconstruction}
\label{sec:exp-3d-recon}

In this task, we reconstruct a mesh from multi-view images of a target object, assuming full knowledge of the rendering model and lighting conditions. Specifically, we employ simple Phong shading~\citep{phong1998illumination} with a directional light from the camera to the object for rendering the ground truth images. We generate $(512\times512)$ diffuse and depth maps of the object from 64 viewpoints (\cref{fig:3d-mv-qual-closed}) to supervise the reconstruction process. Note that we omit colors and textures in this experiment to focus solely on geometric quality.

\begin{table}[t]
    \centering
    \scalebox{0.70}{
  \begin{tabular}{l|ccccc}
    \toprule
    Method & CD{\footnotesize ($\times 10^{-4}$)}$\downarrow$ & F1$\uparrow$ & NC$\uparrow$ & ECD$\downarrow$ & EF1$\uparrow$\\
    \midrule
REM~\citep{palfinger2022continuous} & 58.0 / 27.5 & 0.33 / 0.32 & 0.90 / 0.90 & 0.06 / 0.07 & 0.11 / 0.09 \\
DMT~\cite{shen2021deep} & 36.2 / 65.1 & 0.21 / 0.21 & 0.89 / 0.88 & 0.17 / 0.13 & 0.03 / 0.03 \\
FLE~\cite{shen2023flexible} & 14.5 / 34.1 & 0.37 / 0.35 & 0.92 / 0.91 & 0.04 / 0.06 & 0.10 / 0.08 \\
GSH~\cite{liu2023ghost} & \blue{5.74} / 54.6 & \red{0.38} / \blue{0.36} & \red{0.94} / \blue{0.93} & \blue{0.04} / \blue{0.06} & \blue{0.15} / \blue{0.12}  \\
DME~\cite{son2024dmesh} & 11.9 / \blue{11.7} & 0.18 / 0.20 & 0.87 / 0.87 & 0.06 / 0.08 & 0.04 / 0.04 \\
\midrule
DMesh++ & \red{3.35} / \red{3.82} & \blue{0.37} / \red{0.36} & \blue{0.94} / \red{0.93} & \red{0.03} / \red{0.03} & \red{0.21} / \red{0.17} \\
    \bottomrule
  \end{tabular}
  }
  \caption{\textbf{Quantitative comparison of 3D multi-view reconstruction results over 500 randomly chosen models from Thingi10K~\citep{zhou2016thingi10k}.} The results over closed surfaces and open surfaces are shown together: (closed / open). We highlight the best results for two different categories with \red{red} and \blue{blue}, respectively.} 
\label{tab:mv-quantitative-thingi}
\vspace{-1.5em}
\end{table}

For comparison, we evaluated five mesh reconstruction algorithms: Remeshing~\citep{brandt1992continuous}, DMTet~\citep{shen2021deep}, FlexiCubes~\citep{shen2023flexible}, GShell~\citep{liu2023ghost}, and DMesh~\citep{son2024dmesh}. We optimized each method to produce the best-quality meshes with similar vertex and face counts. Specifically, we set the grid sizes to 128 for DMTet, and 80 for both FlexiCubes and GShell. For more details, please refer to Appendix~\ref{appendix:3d-exp-details}.

In~\cref{tab:mv-quant,tab:mv-quantitative-thingi} and~\cref{fig:mv-qual,fig:self-intersection}, we present quantitative and qualitative comparisons of the reconstruction results. In~\cref{tab:mv-quant}, we observe that Remeshing, DMTet, and FlexiCubes have high CD errors, largely because they cannot represent open surfaces (\cref{fig:mv-qual}). This limitation also explains why Remeshing and FlexiCubes are faster than other methods. Although Remeshing achieved the best AR and avoided non-manifoldness, it generated numerous self-intersections, particularly on open surfaces (\cref{fig:self-intersection}).

\begin{figure}
    \centering
    \includegraphics[width=0.9\linewidth]{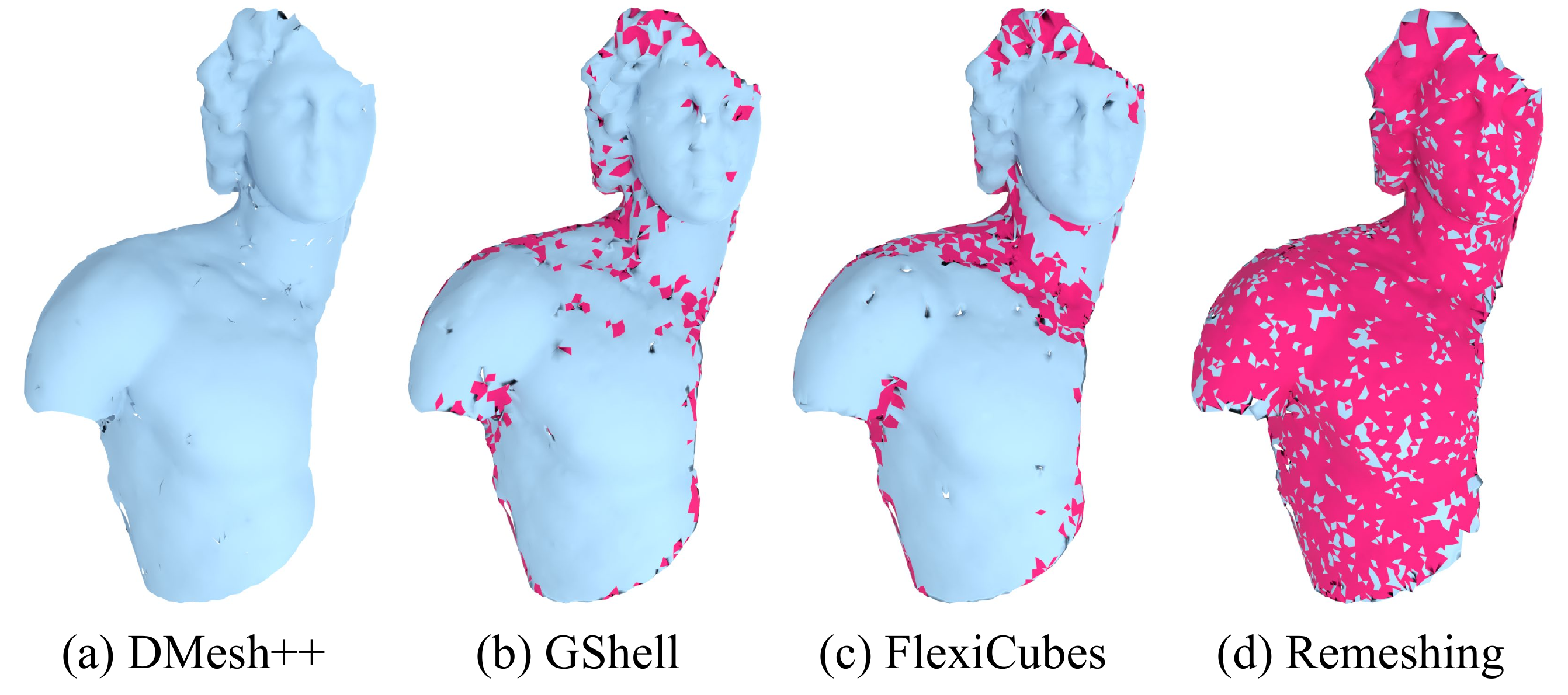}
    \caption{\textbf{Self-intersection of the reconstructed mesh.} The self-intersected faces of the mesh are rendered in red.}
    \label{fig:self-intersection}
    \vspace{-1.5em}
\end{figure}

When it comes to GShell, we found out that it usually improves CD loss by representing open surfaces through sub-surface extraction from closed templates. However, it still struggles with complex open surfaces (\cref{fig:mv-qual}). Also, it had a robustness issue with several outlier models -- which is the main reason of high average CD, even though it achieved much better CD for most of the models. Additionally, it also suffers from self-intersections (\cref{fig:self-intersection}) and suboptimal AR. Compared to that, DMesh produced self-intersection free mesh with better CD and AR, as it can represent open surfaces robustly. Also, it produced much simpler mesh than the other methods. However, it sometimes produced false inner structure due to occlusion (\cref{fig:mv-qual}), and its largest drawback was in computational cost, limiting its utility for fine-grained reconstructions. 

DMesh++ addresses this issue using the \emph{Minimum-Ball} algorithm (\cref{sec:min-ball}) and nearest neighbor caching (Appendix~\ref{appendix:nn-caching}). It achieves the best or comparable results across all metrics while maintaining computational costs similar to those of other methods. Furthermore, as shown in~\cref{tab:mv-quantitative-thingi}, DMesh++ delivers superior, or at least comparable performance for both closed and open surfaces. Qualitative comparisons in~\cref{fig:mv-qual,fig:3d-mv-qual-closed} also support this observation. These results prove the robustness of DMesh++ in handling complex shapes with diverse topologies, whereas other methods are limited in one aspect or another.

\vspace{0.5em}
\noindent\textbf{Colors.} As mentioned in~\cref{fig:mesh_dmesh} and~\cref{sec:preliminary}, points can carry additional features. Here, we demonstrate that we can jointly optimize per-point colors to recover a textured shape from multi-view images. For a point on a face, the color of the point is determined by barycentric interpolation of the face vertex colors. Under the assumption that we know all the rendering and lighting conditions, we can reconstruct textured meshes from multi-view images as shown in~\cref{fig:teaser,fig:pipeline,fig:physics-sim}. In particular,~\cref{fig:physics-sim} demonstrates that DMesh++ can reconstruct a small scene on which physics simulations, such as bouncing balls, can be run directly. These results highlight a promising future direction of jointly optimizing additional per-point features. 

\section{Conclusion}
\label{sec:discussion}


We presented DMesh++, a probabilistic approach for efficient, differentiable mesh connectivity handling. Our \emph{Minimum-Ball} algorithm significantly reduces computational costs, enabling DMesh++ to recover 2D and 3D shapes with diverse topologies from point clouds or multi-view images more effectively than baseline methods. 

\vspace{4pt}\noindent\textbf{Limitations.}  There are areas where DMesh++ can be further improved.
First, some non-manifoldness errors remain in the reconstruction results (\cref{tab:3d-pc-quant,tab:mv-quant}), while observing significant improvement over DMesh~\citep{son2024dmesh}. We conjecture that there is a trade-off between the representation power and the occurrence of these topological errors; more careful analysis of the trade-off is needed to maintain expressiveness while eliminating errors.
Second, there are application-specific challenges. Although our method yields superior 3D point cloud reconstruction results, it incurs higher computational costs (\cref{tab:3d-pc-quant}), and it is less effective for sparse point clouds. In 3D multi-view reconstruction, we cannot use the current implementation for real-world images, as discussed in Appendix~\ref{appendix:mv-limit}.

\begin{figure}
    \centering
    \includegraphics[width=\linewidth]{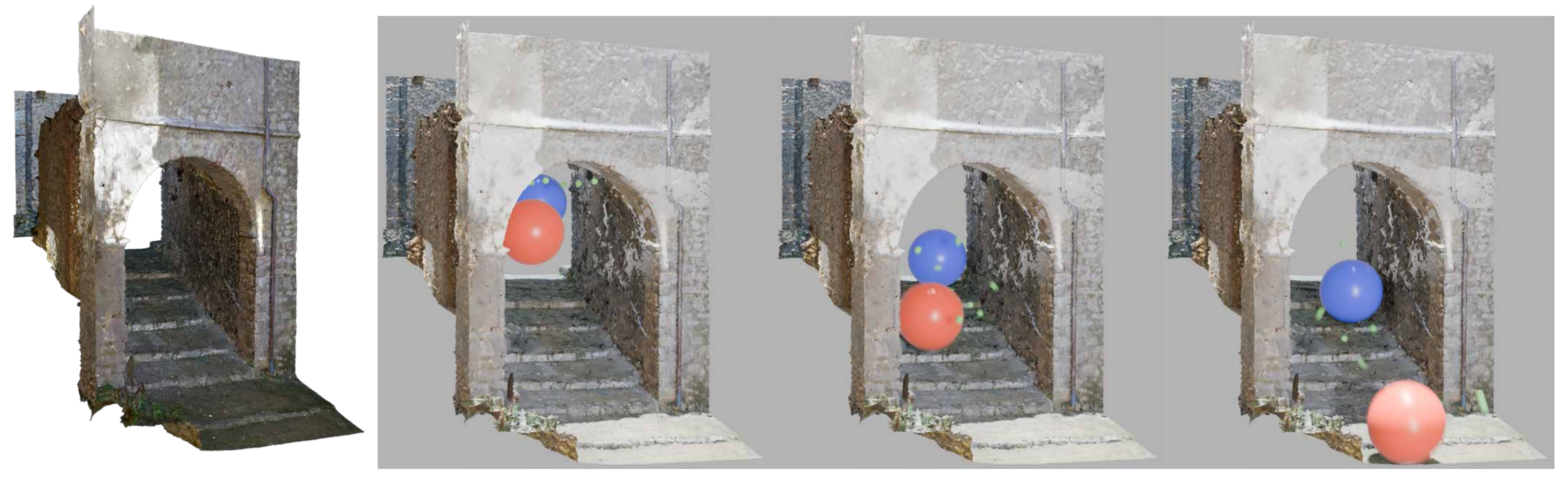}
    \caption{\textbf{Physics simulation on a staircase reconstructed from multi-view images.} We simulate the motion of bouncing balls directly on the mesh generated by DMesh++.}
    \label{fig:physics-sim}
    \vspace{-1em}
\end{figure}

\vspace{4pt}\noindent\textbf{Future Directions.} For 3D point cloud reconstruction, we plan to explore alternative loss formulations to CD loss, as its computation currently dominates the overall cost.

To extend our 3D multi-view reconstruction algorithm to real-world images, we will integrate DMesh++ with alternative representations such as Gaussian Splatting (GS)~\citep{kerbl20233d, huang20242d}. By appending GS features to the per-point features and optimizing them jointly, we expect to recover high-quality meshes that we can readily use for downstream tasks. 

Furthermore, we envision leveraging DMesh++ to train generative models that capture complex mesh connectivity. Currently, the \emph{Minimum-Ball} algorithm only uses point positions for connectivity. In future work, we plan to incorporate additional per-point features into this algorithm to train generative models capable of understanding diverse and intricate mesh connectivity, such as that of DNA (\cref{fig:teaser}).

We believe our work establishes an important foundation for harnessing differentiable, probabilistic mesh within the current optimization framework, and we hope it will further drive future downstream applications.

{
    \small
    \bibliographystyle{ieeenat_fullname}
    \bibliography{main}

\begin{thebibliography}{53}
\providecommand{\natexlab}[1]{#1}
\providecommand{\url}[1]{\texttt{#1}}
\expandafter\ifx\csname urlstyle\endcsname\relax
  \providecommand{\doi}[1]{doi: #1}\else
  \providecommand{\doi}{doi: \begingroup \urlstyle{rm}\Url}\fi

\bibitem[Aurenhammer(1987)]{aurenhammer1987power}
Franz Aurenhammer.
\newblock Power diagrams: properties, algorithms and applications.
\newblock \emph{SIAM Journal on Computing}, 16\penalty0 (1):\penalty0 78--96, 1987.

\bibitem[Brandt and Algazi(1992)]{brandt1992continuous}
Jonathan~W Brandt and V~Ralph Algazi.
\newblock Continuous skeleton computation by voronoi diagram.
\newblock \emph{CVGIP: Image understanding}, 55\penalty0 (3):\penalty0 329--338, 1992.

\bibitem[Bridson(2007)]{bridson2007fast}
Robert Bridson.
\newblock Fast poisson disk sampling in arbitrary dimensions.
\newblock \emph{SIGGRAPH sketches}, 10\penalty0 (1):\penalty0 1, 2007.

\bibitem[Chen et~al.(2019)Chen, Ling, Gao, Smith, Lehtinen, Jacobson, and Fidler]{chen2019learning}
Wenzheng Chen, Huan Ling, Jun Gao, Edward Smith, Jaakko Lehtinen, Alec Jacobson, and Sanja Fidler.
\newblock Learning to predict 3d objects with an interpolation-based differentiable renderer.
\newblock \emph{Advances in neural information processing systems}, 32, 2019.

\bibitem[Chen et~al.(2024{\natexlab{a}})Chen, He, Huang, Ye, Chen, Tang, Chen, Cai, Yang, Yu, et~al.]{chen2024meshanything}
Yiwen Chen, Tong He, Di Huang, Weicai Ye, Sijin Chen, Jiaxiang Tang, Xin Chen, Zhongang Cai, Lei Yang, Gang Yu, et~al.
\newblock Meshanything: Artist-created mesh generation with autoregressive transformers.
\newblock \emph{arXiv preprint arXiv:2406.10163}, 2024{\natexlab{a}}.

\bibitem[Chen et~al.(2024{\natexlab{b}})Chen, Wang, Luo, Wang, Chen, Zhu, Zhang, and Lin]{chen2024meshanything2}
Yiwen Chen, Yikai Wang, Yihao Luo, Zhengyi Wang, Zilong Chen, Jun Zhu, Chi Zhang, and Guosheng Lin.
\newblock Meshanything v2: Artist-created mesh generation with adjacent mesh tokenization.
\newblock \emph{arXiv preprint arXiv:2408.02555}, 2024{\natexlab{b}}.

\bibitem[Chen et~al.(2022)Chen, Tagliasacchi, Funkhouser, and Zhang]{chen2022neural}
Zhiqin Chen, Andrea Tagliasacchi, Thomas Funkhouser, and Hao Zhang.
\newblock Neural dual contouring.
\newblock \emph{ACM Transactions on Graphics (TOG)}, 41\penalty0 (4):\penalty0 1--13, 2022.

\bibitem[Cheng et~al.(2013)Cheng, Dey, Shewchuk, and Sahni]{cheng2013delaunay}
Siu-Wing Cheng, Tamal~Krishna Dey, Jonathan Shewchuk, and Sartaj Sahni.
\newblock \emph{Delaunay mesh generation}.
\newblock CRC Press Boca Raton, 2013.

\bibitem[Cignoni et~al.(2008)Cignoni, Callieri, Corsini, Dellepiane, Ganovelli, Ranzuglia, et~al.]{cignoni2008meshlab}
Paolo Cignoni, Marco Callieri, Massimiliano Corsini, Matteo Dellepiane, Fabio Ganovelli, Guido Ranzuglia, et~al.
\newblock Meshlab: an open-source mesh processing tool.
\newblock In \emph{Eurographics Italian chapter conference}, pages 129--136. Salerno, Italy, 2008.

\bibitem[Deitke et~al.(2023)Deitke, Schwenk, Salvador, Weihs, Michel, VanderBilt, Schmidt, Ehsani, Kembhavi, and Farhadi]{deitke2023objaverse}
Matt Deitke, Dustin Schwenk, Jordi Salvador, Luca Weihs, Oscar Michel, Eli VanderBilt, Ludwig Schmidt, Kiana Ehsani, Aniruddha Kembhavi, and Ali Farhadi.
\newblock Objaverse: A universe of annotated 3d objects.
\newblock In \emph{Proceedings of the IEEE/CVF Conference on Computer Vision and Pattern Recognition}, pages 13142--13153, 2023.

\bibitem[Greensmith et~al.(2004)Greensmith, Bartlett, and Baxter]{greensmith2004variance}
Evan Greensmith, Peter~L Bartlett, and Jonathan Baxter.
\newblock Variance reduction techniques for gradient estimates in reinforcement learning.
\newblock \emph{Journal of Machine Learning Research}, 5\penalty0 (9), 2004.

\bibitem[Guillard et~al.(2022)Guillard, Stella, and Fua]{guillard2022meshudf}
Benoit Guillard, Federico Stella, and Pascal Fua.
\newblock Meshudf: Fast and differentiable meshing of unsigned distance field networks.
\newblock In \emph{European Conference on Computer Vision}, pages 576--592. Springer, 2022.

\bibitem[Huang et~al.(2024)Huang, Yu, Chen, Geiger, and Gao]{huang20242d}
Binbin Huang, Zehao Yu, Anpei Chen, Andreas Geiger, and Shenghua Gao.
\newblock 2d gaussian splatting for geometrically accurate radiance fields.
\newblock In \emph{ACM SIGGRAPH 2024 conference papers}, pages 1--11, 2024.

\bibitem[Jamin et~al.(2023)Jamin, Pion, and Teillaud]{cgal:pt-t3-23b}
Cl{\'e}ment Jamin, Sylvain Pion, and Monique Teillaud.
\newblock {3D} triangulations.
\newblock In \emph{{CGAL} User and Reference Manual}. {CGAL Editorial Board}, {5.6} edition, 2023.

\bibitem[Jensen et~al.(2014)Jensen, Dahl, Vogiatzis, Tola, and Aan{\ae}s]{jensen2014large}
Rasmus Jensen, Anders Dahl, George Vogiatzis, Engin Tola, and Henrik Aan{\ae}s.
\newblock Large scale multi-view stereopsis evaluation.
\newblock In \emph{Proceedings of the IEEE conference on computer vision and pattern recognition}, pages 406--413, 2014.

\bibitem[Ju et~al.(2002)Ju, Losasso, Schaefer, and Warren]{ju2002dual}
Tao Ju, Frank Losasso, Scott Schaefer, and Joe Warren.
\newblock Dual contouring of hermite data.
\newblock In \emph{Proceedings of the 29th annual conference on Computer graphics and interactive techniques}, pages 339--346, 2002.

\bibitem[Kazhdan and Hoppe(2013)]{kazhdan2013screened}
Michael Kazhdan and Hugues Hoppe.
\newblock Screened poisson surface reconstruction.
\newblock \emph{ACM Transactions on Graphics (ToG)}, 32\penalty0 (3):\penalty0 1--13, 2013.

\bibitem[Kerbl et~al.(2023)Kerbl, Kopanas, Leimk{\"u}hler, and Drettakis]{kerbl20233d}
Bernhard Kerbl, Georgios Kopanas, Thomas Leimk{\"u}hler, and George Drettakis.
\newblock 3d gaussian splatting for real-time radiance field rendering.
\newblock \emph{ACM Transactions on Graphics}, 42\penalty0 (4), 2023.

\bibitem[Laine et~al.(2020)Laine, Hellsten, Karras, Seol, Lehtinen, and Aila]{laine2020modular}
Samuli Laine, Janne Hellsten, Tero Karras, Yeongho Seol, Jaakko Lehtinen, and Timo Aila.
\newblock Modular primitives for high-performance differentiable rendering.
\newblock \emph{ACM Transactions on Graphics (TOG)}, 39\penalty0 (6):\penalty0 1--14, 2020.

\bibitem[Liao et~al.(2018)Liao, Donne, and Geiger]{liao2018deep}
Yiyi Liao, Simon Donne, and Andreas Geiger.
\newblock Deep marching cubes: Learning explicit surface representations.
\newblock In \emph{Proceedings of the IEEE Conference on Computer Vision and Pattern Recognition}, pages 2916--2925, 2018.

\bibitem[Liu et~al.(2019)Liu, Li, Chen, and Li]{liu2019soft}
Shichen Liu, Tianye Li, Weikai Chen, and Hao Li.
\newblock Soft rasterizer: A differentiable renderer for image-based 3d reasoning.
\newblock In \emph{Proceedings of the IEEE/CVF International Conference on Computer Vision}, pages 7708--7717, 2019.

\bibitem[Liu et~al.(2023{\natexlab{a}})Liu, Wang, Yang, Chen, Meng, Yang, and Gao]{liu2023neudf}
Yu-Tao Liu, Li Wang, Jie Yang, Weikai Chen, Xiaoxu Meng, Bo Yang, and Lin Gao.
\newblock Neudf: Leaning neural unsigned distance fields with volume rendering.
\newblock In \emph{Proceedings of the IEEE/CVF Conference on Computer Vision and Pattern Recognition}, pages 237--247, 2023{\natexlab{a}}.

\bibitem[Liu et~al.(2023{\natexlab{b}})Liu, Feng, Xiu, Liu, Paull, Black, and Sch{\"o}lkopf]{liu2023ghost}
Zhen Liu, Yao Feng, Yuliang Xiu, Weiyang Liu, Liam Paull, Michael~J Black, and Bernhard Sch{\"o}lkopf.
\newblock Ghost on the shell: An expressive representation of general 3d shapes.
\newblock \emph{arXiv preprint arXiv:2310.15168}, 2023{\natexlab{b}}.

\bibitem[Long et~al.(2023)Long, Lin, Liu, Liu, Wang, Theobalt, Komura, and Wang]{long2023neuraludf}
Xiaoxiao Long, Cheng Lin, Lingjie Liu, Yuan Liu, Peng Wang, Christian Theobalt, Taku Komura, and Wenping Wang.
\newblock Neuraludf: Learning unsigned distance fields for multi-view reconstruction of surfaces with arbitrary topologies.
\newblock In \emph{Proceedings of the IEEE/CVF Conference on Computer Vision and Pattern Recognition}, pages 20834--20843, 2023.

\bibitem[Lorensen and Cline(1998)]{lorensen1998marching}
William~E Lorensen and Harvey~E Cline.
\newblock Marching cubes: A high resolution 3d surface construction algorithm.
\newblock In \emph{Seminal graphics: pioneering efforts that shaped the field}, pages 347--353. 1998.

\bibitem[Maruani et~al.(2023)Maruani, Klokov, Ovsjanikov, Alliez, and Desbrun]{maruani2023voromesh}
Nissim Maruani, Roman Klokov, Maks Ovsjanikov, Pierre Alliez, and Mathieu Desbrun.
\newblock Voromesh: Learning watertight surface meshes with voronoi diagrams.
\newblock In \emph{Proceedings of the IEEE/CVF International Conference on Computer Vision}, pages 14565--14574, 2023.

\bibitem[Maruani et~al.(2024)Maruani, Ovsjanikov, Alliez, and Desbrun]{maruani2024ponq}
Nissim Maruani, Maks Ovsjanikov, Pierre Alliez, and Mathieu Desbrun.
\newblock Ponq: a neural qem-based mesh representation.
\newblock In \emph{Proceedings of the IEEE/CVF Conference on Computer Vision and Pattern Recognition}, pages 3647--3657, 2024.

\bibitem[Mehta et~al.(2022)Mehta, Chandraker, and Ramamoorthi]{mehta2022level}
Ishit Mehta, Manmohan Chandraker, and Ravi Ramamoorthi.
\newblock A level set theory for neural implicit evolution under explicit flows.
\newblock In \emph{European Conference on Computer Vision}, pages 711--729. Springer, 2022.

\bibitem[Munkberg et~al.(2022)Munkberg, Hasselgren, Shen, Gao, Chen, Evans, M{\"u}ller, and Fidler]{munkberg2022extracting}
Jacob Munkberg, Jon Hasselgren, Tianchang Shen, Jun Gao, Wenzheng Chen, Alex Evans, Thomas M{\"u}ller, and Sanja Fidler.
\newblock Extracting triangular 3d models, materials, and lighting from images.
\newblock In \emph{Proceedings of the IEEE/CVF Conference on Computer Vision and Pattern Recognition}, pages 8280--8290, 2022.

\bibitem[Nickolls et~al.(2008)Nickolls, Buck, Garland, and Skadron]{nickolls2008scalable}
John Nickolls, Ian Buck, Michael Garland, and Kevin Skadron.
\newblock Scalable parallel programming with cuda: Is cuda the parallel programming model that application developers have been waiting for?
\newblock \emph{Queue}, 6\penalty0 (2):\penalty0 40--53, 2008.

\bibitem[Nicolet et~al.(2021)Nicolet, Jacobson, and Jakob]{nicolet2021large}
Baptiste Nicolet, Alec Jacobson, and Wenzel Jakob.
\newblock Large steps in inverse rendering of geometry.
\newblock \emph{ACM Transactions on Graphics (TOG)}, 40\penalty0 (6):\penalty0 1--13, 2021.

\bibitem[Oechsle et~al.(2021)Oechsle, Peng, and Geiger]{Oechsle2021ICCV}
Michael Oechsle, Songyou Peng, and Andreas Geiger.
\newblock Unisurf: Unifying neural implicit surfaces and radiance fields for multi-view reconstruction.
\newblock In \emph{International Conference on Computer Vision (ICCV)}, 2021.

\bibitem[Palfinger(2022)]{palfinger2022continuous}
Werner Palfinger.
\newblock Continuous remeshing for inverse rendering.
\newblock \emph{Computer Animation and Virtual Worlds}, 33\penalty0 (5):\penalty0 e2101, 2022.

\bibitem[Park et~al.(2019)Park, Florence, Straub, Newcombe, and Lovegrove]{park2019deepsdf}
Jeong~Joon Park, Peter Florence, Julian Straub, Richard Newcombe, and Steven Lovegrove.
\newblock Deepsdf: Learning continuous signed distance functions for shape representation.
\newblock In \emph{Proceedings of the IEEE/CVF conference on computer vision and pattern recognition}, pages 165--174, 2019.

\bibitem[Paszke et~al.(2017)Paszke, Gross, Chintala, Chanan, Yang, DeVito, Lin, Desmaison, Antiga, and Lerer]{paszke2017automatic}
Adam Paszke, Sam Gross, Soumith Chintala, Gregory Chanan, Edward Yang, Zachary DeVito, Zeming Lin, Alban Desmaison, Luca Antiga, and Adam Lerer.
\newblock Automatic differentiation in pytorch.
\newblock 2017.

\bibitem[Phong(1998)]{phong1998illumination}
Bui~Tuong Phong.
\newblock Illumination for computer generated pictures.
\newblock In \emph{Seminal graphics: pioneering efforts that shaped the field}, pages 95--101. 1998.

\bibitem[Ravi et~al.(2020)Ravi, Reizenstein, Novotny, Gordon, Lo, Johnson, and Gkioxari]{ravi2020accelerating}
Nikhila Ravi, Jeremy Reizenstein, David Novotny, Taylor Gordon, Wan-Yen Lo, Justin Johnson, and Georgia Gkioxari.
\newblock Accelerating 3d deep learning with pytorch3d.
\newblock \emph{arXiv preprint arXiv:2007.08501}, 2020.

\bibitem[Shen et~al.(2021)Shen, Gao, Yin, Liu, and Fidler]{shen2021deep}
Tianchang Shen, Jun Gao, Kangxue Yin, Ming-Yu Liu, and Sanja Fidler.
\newblock Deep marching tetrahedra: a hybrid representation for high-resolution 3d shape synthesis.
\newblock \emph{Advances in Neural Information Processing Systems}, 34:\penalty0 6087--6101, 2021.

\bibitem[Shen et~al.(2023)Shen, Munkberg, Hasselgren, Yin, Wang, Chen, Gojcic, Fidler, Sharp, and Gao]{shen2023flexible}
Tianchang Shen, Jacob Munkberg, Jon Hasselgren, Kangxue Yin, Zian Wang, Wenzheng Chen, Zan Gojcic, Sanja Fidler, Nicholas Sharp, and Jun Gao.
\newblock Flexible isosurface extraction for gradient-based mesh optimization.
\newblock \emph{ACM Transactions on Graphics (TOG)}, 42\penalty0 (4):\penalty0 1--16, 2023.

\bibitem[Shen et~al.(2024)Shen, Li, Law, Atzmon, Fidler, Lucas, Gao, and Sharp]{shen2024spacemesh}
Tianchang Shen, Zhaoshuo Li, Marc Law, Matan Atzmon, Sanja Fidler, James Lucas, Jun Gao, and Nicholas Sharp.
\newblock Spacemesh: A continuous representation for learning manifold surface meshes.
\newblock \emph{arXiv preprint arXiv:2409.20562}, 2024.

\bibitem[Siddiqui et~al.(2024)Siddiqui, Alliegro, Artemov, Tommasi, Sirigatti, Rosov, Dai, and Nie{\ss}ner]{siddiqui2024meshgpt}
Yawar Siddiqui, Antonio Alliegro, Alexey Artemov, Tatiana Tommasi, Daniele Sirigatti, Vladislav Rosov, Angela Dai, and Matthias Nie{\ss}ner.
\newblock Meshgpt: Generating triangle meshes with decoder-only transformers.
\newblock In \emph{Proceedings of the IEEE/CVF Conference on Computer Vision and Pattern Recognition}, pages 19615--19625, 2024.

\bibitem[Son et~al.(2024)Son, Gadelha, Zhou, Xu, Lin, and Zhou]{son2024dmesh}
Sanghyun Son, Matheus Gadelha, Yang Zhou, Zexiang Xu, Ming~C Lin, and Yi Zhou.
\newblock Dmesh: A differentiable representation for general meshes.
\newblock \emph{arXiv preprint arXiv:2404.13445}, 2024.

\bibitem[Van~der Maaten and Hinton(2008)]{van2008visualizing}
Laurens Van~der Maaten and Geoffrey Hinton.
\newblock Visualizing data using t-sne.
\newblock \emph{Journal of machine learning research}, 9\penalty0 (11), 2008.

\bibitem[Wang et~al.(2021)Wang, Liu, Liu, Theobalt, Komura, and Wang]{wang2021neus}
Peng Wang, Lingjie Liu, Yuan Liu, Christian Theobalt, Taku Komura, and Wenping Wang.
\newblock Neus: Learning neural implicit surfaces by volume rendering for multi-view reconstruction.
\newblock \emph{arXiv preprint arXiv:2106.10689}, 2021.

\bibitem[Wang et~al.(2023)Wang, Han, Habermann, Daniilidis, Theobalt, and Liu]{wang2023neus2}
Yiming Wang, Qin Han, Marc Habermann, Kostas Daniilidis, Christian Theobalt, and Lingjie Liu.
\newblock Neus2: Fast learning of neural implicit surfaces for multi-view reconstruction.
\newblock In \emph{Proceedings of the IEEE/CVF International Conference on Computer Vision (ICCV)}, 2023.

\bibitem[Wei et~al.(2023)Wei, Xiang, Bi, Chen, Sunkavalli, Xu, and Su]{wei2023neumanifold}
Xinyue Wei, Fanbo Xiang, Sai Bi, Anpei Chen, Kalyan Sunkavalli, Zexiang Xu, and Hao Su.
\newblock Neumanifold: Neural watertight manifold reconstruction with efficient and high-quality rendering support.
\newblock \emph{arXiv preprint arXiv:2305.17134}, 2023.

\bibitem[Wei et~al.(2024)Wei, Zhang, Bi, Tan, Luan, Deschaintre, Sunkavalli, Su, and Xu]{wei2024meshlrm}
Xinyue Wei, Kai Zhang, Sai Bi, Hao Tan, Fujun Luan, Valentin Deschaintre, Kalyan Sunkavalli, Hao Su, and Zexiang Xu.
\newblock Meshlrm: Large reconstruction model for high-quality mesh.
\newblock \emph{arXiv preprint arXiv:2404.12385}, 2024.

\bibitem[Williams(1992)]{williams1992simple}
Ronald~J Williams.
\newblock Simple statistical gradient-following algorithms for connectionist reinforcement learning.
\newblock \emph{Machine learning}, 8:\penalty0 229--256, 1992.

\bibitem[Yariv et~al.(2021)Yariv, Gu, Kasten, and Lipman]{yariv2021volsdf}
Lior Yariv, Jiatao Gu, Yoni Kasten, and Yaron Lipman.
\newblock Volume rendering of neural implicit surfaces.
\newblock In \emph{Thirty-Fifth Conference on Neural Information Processing Systems}, 2021.

\bibitem[Yu et~al.(2023)Yu, Dou, Long, Lin, Li, Liu, M{\"u}ller, Komura, Habermann, Theobalt, et~al.]{yu2023surf}
Zhengming Yu, Zhiyang Dou, Xiaoxiao Long, Cheng Lin, Zekun Li, Yuan Liu, Norman M{\"u}ller, Taku Komura, Marc Habermann, Christian Theobalt, et~al.
\newblock Surf-d: High-quality surface generation for arbitrary topologies using diffusion models.
\newblock \emph{arXiv preprint arXiv:2311.17050}, 2023.

\bibitem[Zhang et~al.(2024)Zhang, Wang, Zhang, Qiu, Pang, Jiang, Yang, Xu, and Yu]{zhang2024clay}
Longwen Zhang, Ziyu Wang, Qixuan Zhang, Qiwei Qiu, Anqi Pang, Haoran Jiang, Wei Yang, Lan Xu, and Jingyi Yu.
\newblock Clay: A controllable large-scale generative model for creating high-quality 3d assets.
\newblock \emph{ACM Transactions on Graphics (TOG)}, 43\penalty0 (4):\penalty0 1--20, 2024.

\bibitem[Zhou and Jacobson(2016)]{zhou2016thingi10k}
Qingnan Zhou and Alec Jacobson.
\newblock Thingi10k: A dataset of 10,000 3d-printing models.
\newblock \emph{arXiv preprint arXiv:1605.04797}, 2016.

\bibitem[Zhou et~al.(2020)Zhou, Wu, Li, Cao, Ye, Saragih, Li, and Sheikh]{zhou2020fully}
Yi Zhou, Chenglei Wu, Zimo Li, Chen Cao, Yuting Ye, Jason Saragih, Hao Li, and Yaser Sheikh.
\newblock Fully convolutional mesh autoencoder using efficient spatially varying kernels.
\newblock \emph{Advances in neural information processing systems}, 33:\penalty0 9251--9262, 2020.

\end{thebibliography}
}

\clearpage
\setcounter{page}{1}
\maketitlesupplementary





\section{Details about \emph{Minimum-Ball} algorithm}

\subsection{Algorithm}
\label{appendix:min-ball-algo}

\begin{algorithm}
\caption{\texttt{Minimum-Ball}}
\label{alg:min-ball}
\begin{algorithmic}[1]
    \State $\mathbb{P, F} \gets$ Set of points and query faces
    \State $\alpha_{min} \gets$ Coefficient for sigmoid function
    \State $B_{\mathbb{F}}^{c}, B_{\mathbb{F}}^{r} \gets$ \texttt{Compute-Minimum-Ball($\mathbb{P, F}$)}
    \State $P^{nearest}_{\mathbb{F}} \gets$ \texttt{Find-Nearest-Neighbor($B_{\mathbb{F}}^{c}$, $\mathbb{P}$)}
    \State $d(B_{\mathbb{F}}, \mathbb{P}) \gets B_{\mathbb{F}}^{r} - ||P^{nearest}_{\mathbb{F}} - B^{c}_{\mathbb{F}}||$
    \State $\lambda_{min}(\mathbb{F}) \gets \sigma(d(B_{\mathbb{F}}, \mathbb{P}) \cdot \alpha_{min})$
    \State \textbf{return} $\lambda_{min}(\mathbb{F})$
\end{algorithmic}
\end{algorithm}

We formally describe the \emph{Minimum-Ball algorithm} in~\cref{alg:min-ball}. 

\begin{itemize}
    \item \textbf{Line 1:} We define the given set of points (specifically, their positions) as $\mathbb{P}$ and the query faces as $\mathbb{F}$.
    \item \textbf{Line 2:} We introduce $\alpha_{min}$, the coefficient for the sigmoid function used to map the signed distance to a probability. Details on determining $\alpha_{min}$ are provided in Appendix~\ref{appendix:min-ball-sigmoid-coef}.
    \item \textbf{Line 3:} For each query face $F \in \mathbb{F}$, we compute the minimum bounding ball ($B_F$) as described in Appendix~\ref{appendix:min-ball-computation}. We denote the entire set of bounding balls as $B_{\mathbb{F}}$, their centers as $B_{\mathbb{F}}^{c}$, and their radii as $B_{\mathbb{F}}^{r}$.
    \item \textbf{Line 4:} For each $F \in \mathbb{F}$, we find the nearest neighbor of $B_F^c$ in $\mathbb{P} - F$. However, this operation cannot be parallelized across all query faces because the set $\mathbb{P} - F$ varies for each face. To address this, we find $(d + 1)$-nearest neighbors of $B_F^c$ in $\mathbb{P}$, where $d$ is the spatial dimension. This approach ensures correctness in two scenarios:
    \begin{itemize}
        \item If $F \in \mathbb{F}_{min}$, the bounding ball $B_F$ does not contain any points from $\mathbb{P}$ within it, and the points on $F$ are the $d$-nearest neighbors of $B_F^c$. To find the nearest neighbor in $\mathbb{P} - F$, we need to consider $(d + 1)$-nearest neighbors.
        \item If $F \notin \mathbb{F}_{min}$, only the single nearest neighbor of $B_F^c$ is relevant.
    \end{itemize}
    To safely handle both cases, we always search for $(d + 1)$-nearest neighbors and then select the first neighbor from the list that does not belong to $F$.
    \item \textbf{Line 5:} We compute the signed distance $d(B_{\mathbb{F}}, \mathbb{P})$ for all query faces.
    \item \textbf{Line 6:} The signed distance is converted to a probability using the sigmoid function, with $\alpha_{min}$ as the scaling factor.
    \item \textbf{Line 7:} Finally, the algorithm returns the computed probabilities for all faces.
\end{itemize}

\subsection{Minimum-Ball computation}
\label{appendix:min-ball-computation}

Let us define a face $F = \{p_{1}, p_{2}, \dots, p_{d} \}$, where $p_{i} \in \mathbb{P}$. To determine the bounding balls of $F$, we first identify the set of points that are equidistant from the vertices of $F$. Among these, we select the point lying on the hyperplane containing $F$ as the center of the minimum bounding ball, denoted as $B_{F}^{c}$. 

When $d = 2$, the center simplifies to the midpoint of $F$:
\begin{equation}
    B_{F}^{c} \big|_{d=2} = \frac{1}{2}(p_{1} + p_{2}).
\end{equation}

For $d = 3$, the computation is more complex~\footnote{Derived from the Geometry Junkyard: \url{https://ics.uci.edu/~eppstein/junkyard/circumcenter.html}}:
\begin{equation}
\scriptsize{
    B_{F}^{c} \big|_{d=3} = p_{1} + \frac{||d_{2}||^{2} (d_1 \times d_2) \times d_1 + ||d_{1}||^{2} (d_2 \times d_1) \times d_2}{2||d_{1} \times d_{2}||^{2}},
}
\end{equation}
where $d_1 = p_2 - p_1$ and $d_2 = p_3 - p_1$.

Unlike the case where $d = 2$, for $d = 3$, we cannot compute $B_{F}^{c}$ if $||d_{1} \times d_{2}|| = 0$. During computation, cases where this value falls below a certain threshold are marked and excluded from subsequent steps to avoid numerical instability.

After determining $B_{F}^{c}$, we calculate the radius $B_{F}^{r}$ as the distance between $B_{F}^{c}$ and the points on $F$.

\subsection{Sigmoid coefficient $\alpha_{min}$}
\label{appendix:min-ball-sigmoid-coef}

\begin{figure}
    \centering
    \includegraphics[width=\linewidth]{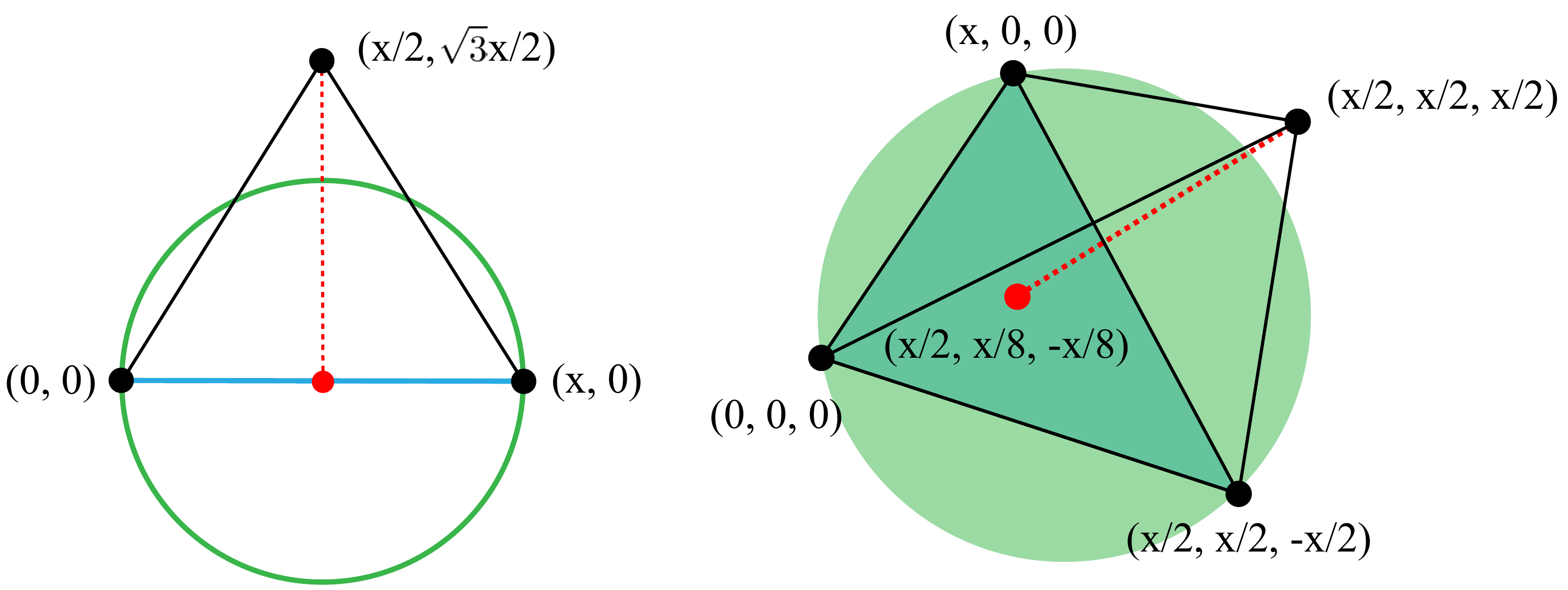}
    \caption{\textbf{Common signed distance for a 2D (left) and 3D (right) face in (initial) regular grid.} We compute the signed distance by subtracting the radius of the \textcolor{LimeGreen}{minimum bounding ball} from the length of the \textcolor{red}{red line}. The \textcolor{red}{red dot} represents the center of the \textcolor{LimeGreen}{minimum bounding ball}.}
    \label{fig:sigmoid-alpha}
\end{figure}

The sigmoid coefficient $\alpha_{min}$ plays a critical role in determining the probability to which a signed distance $d$ is mapped. Even if a face $F$ satisfies the \emph{Minimum-Ball} condition by a large margin ($d(B_{F}, \mathbb{F}) \gg 0$), indicating a high existence probability for $F$, a small $\alpha_{min}$ value would result in the derived probability being only slightly greater than $0.5$. To minimize such mismatches, we set $\alpha_{min}$ based on the density of the grid from which optimization begins.

As discussed in Appendix~\ref{appendix:pipeline-1-init}, the reconstruction process often starts from a fixed triangular (2D) or tetrahedral (3D) grid (\cref{fig:grid}). At the initial state, every face in the grid satisfies the \emph{Minimum-Ball} condition (Appendix~\ref{appendix:pipeline-1-init}). Notably, every interior face in the grid shares a common signed distance $d_{common} > 0$. Let us denote $x$ as the edge length of the grid, applicable for both 2D and 3D cases. Then, the common signed distance can be computed as follows:

For $d = 2$:
\begin{equation}
    d_{common} = \frac{\sqrt{3} - 1}{2}x.
\end{equation}

For $d = 3$:
\begin{equation}
    d_{common} = \frac{\sqrt{34} - 3\sqrt{2}}{8}x.
\end{equation}

In~\cref{fig:sigmoid-alpha}, we provide an illustration of the reasoning behind these results. By calculating these common signed distances, we use them to determine $\alpha_{min}$. Specifically, during the first epoch, we set $\alpha_{min} = 32 / d_{common}$, ensuring that the probability for every face in the grid is initialized to $\sigma(32) \simeq 1.0$. 

In subsequent epochs, $\alpha_{min}$ is adjusted to account for the additional points introduced during subdivision. If $\alpha_{min}^{1}$ represents the value in the first epoch, the value for the $i$-th epoch is given by:
\begin{equation}
    \alpha_{min}^{i} = \frac{\alpha_{min}^{1}}{2^{i - 1}}.
\end{equation}

\subsection{Nearest neighbor caching}
\label{appendix:nn-caching}

In~\cref{sec:min-ball,sec:exp-tess-speed}, we demonstrated how the \emph{Minimum-Ball} algorithm significantly accelerates tessellation. This process can be further optimized by periodically caching the $K$-nearest neighbors of each $B_{F}^{c}$ in $\mathbb{P}$ and using the cached neighbors for computing probabilities until the next cache update. This optimization is feasible because the $K$-nearest neighbors generally do not change significantly during the optimization process.

Let us define the number of optimization steps as $n_0$ and the number of steps between cache updates as $n_1$. At every $n_1$ steps, we refresh the query faces $\mathbb{F}$ based on the current set of points $\mathbb{P}$ and recompute the centers of the minimum bounding balls for the query faces ($B_{\mathbb{F}}^{c}$). Then, we identify the $K$-nearest neighbors of $B_{\mathbb{F}}^{c}$ in $\mathbb{P}$. In practice, we compute the $(K + d)$-nearest neighbors instead, as explained in Appendix~\ref{appendix:min-ball-algo}, to ensure robustness. 

During the subsequent optimization steps, for a given face $F$, we compute the distance from $B_{F}^{c}$ to the cached $K$-nearest neighbors in $\mathbb{P}$ and select the nearest neighbor from the cache to compute the signed distance $d(B_{F}, \mathbb{P})$. This mechanism is described in detail in~\cref{alg:pos-optim} and Appendix~\ref{appendix:pipeline-2-pos-optim} in the context of point position optimization. In our experiments for 3D multi-view reconstruction, we set $n_0 = 2000$, $n_1 = 50$, and $K = 10$.


\begin{figure}
    \centering
    \includegraphics[width=1.0\linewidth]{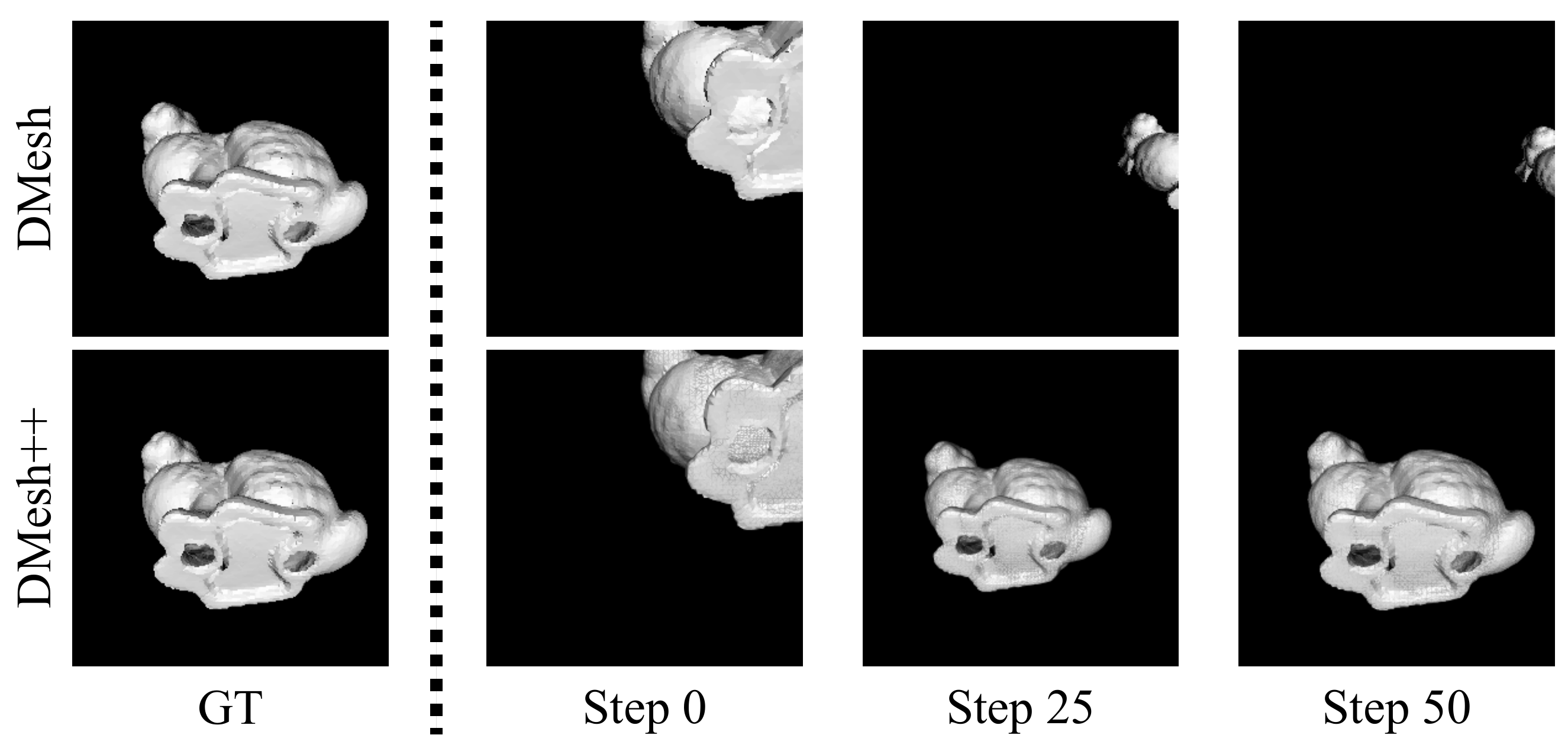}
    \caption{\textbf{Role of visibility gradient in geometric optimization.} In this experiment, we optimize the translation vector of the object by comparing its rendered image and the ground truth image on the left. Since the differentiable renderer of DMesh~\citep{son2024dmesh} does not implement visibility gradient, while ours does, DMesh fails to find the correct translation vector.}
    \label{fig:aa-test}
    \vspace{-1em}
\end{figure}

\section{Details about Reconstruction Process}
\label{appendix:recon-detail}

In this section, we provide implementation details about our reconstruction process described in~\cref{sec:recon-process}. Before delving into these details, we introduce the loss formulations for reconstruction problems.

\subsection{Loss Formulation}
\label{appendix:pipeline-loss}

Our final loss, $L$, is comprised of main reconstruction loss ($L_{recon}$) and two regularization terms: $L_{qual}$ and $L_{real}$.
\begin{equation}
    L = L_{recon} + \lambda_{qual}\cdot L_{qual} + \lambda_{real}\cdot L_{real}.
\label{eq:final-loss}
\end{equation}

We explain each of these terms below.

\subsubsection{Reconstruction Loss ($L_{recon}$)}

Reconstruction loss drives the reconstruction process by comparing our current probabilistic mesh and the given ground truth observations. For different observations, we need different loss functions as follows.

\paragraph{Point Cloud} 
When ground truth point clouds are provided, we utilize the expected Chamfer Distance (CD) proposed by~\citep{son2024dmesh}. In this formulation, when sampling points from our mesh, we assign an existence probability to each sampled point, which matches the probability of the face from which the point is sampled. The expected CD incorporates these probabilities, unlike the traditional Chamfer Distance, which does not. For further details, refer to~\citep{son2024dmesh}.

\paragraph{Multi-view Images} 
For rendering probabilistic faces, we interpret each face's existence probability as its opacity, following~\citep{son2024dmesh}. To render large number of semi-transparent faces efficiently, we use the differentiable renderer of~\citep{son2024dmesh}, but we found out that it does not implement visibility gradient that is necessary for optimizing geometric properties (\cref{fig:aa-test}). For the details about this visibility gradient, please refer to~\citep{laine2020modular}, which implemented the visibility gradient using anti-aliasing. Following their path, we enhanced the differentiable renderer of DMesh by implementing anti-aliasing in CUDA, which provides us visibility gradients.

In~\cref{fig:aa-impl}, we briefly illustrate how we implemented anti-aliasing in CUDA. Specifically, for each (triangular) face-pixel pair $(F, P)$, we project $F$ onto the image space and compute the overlapping area $A(F, P)$ between the projected triangle and the pixel (\cref{fig:aa-impl} right, blue area). Denoting the total area of the pixel as $A(P)$, the ratio of the overlapping area in the given pixel, $\rho(F, P)$, is computed as:
\begin{equation}
    \rho(F, P) = \frac{A(F, P)}{A(P)}.
\end{equation}

\begin{figure}[t]
    \centering
    \includegraphics[width=1.0\linewidth]{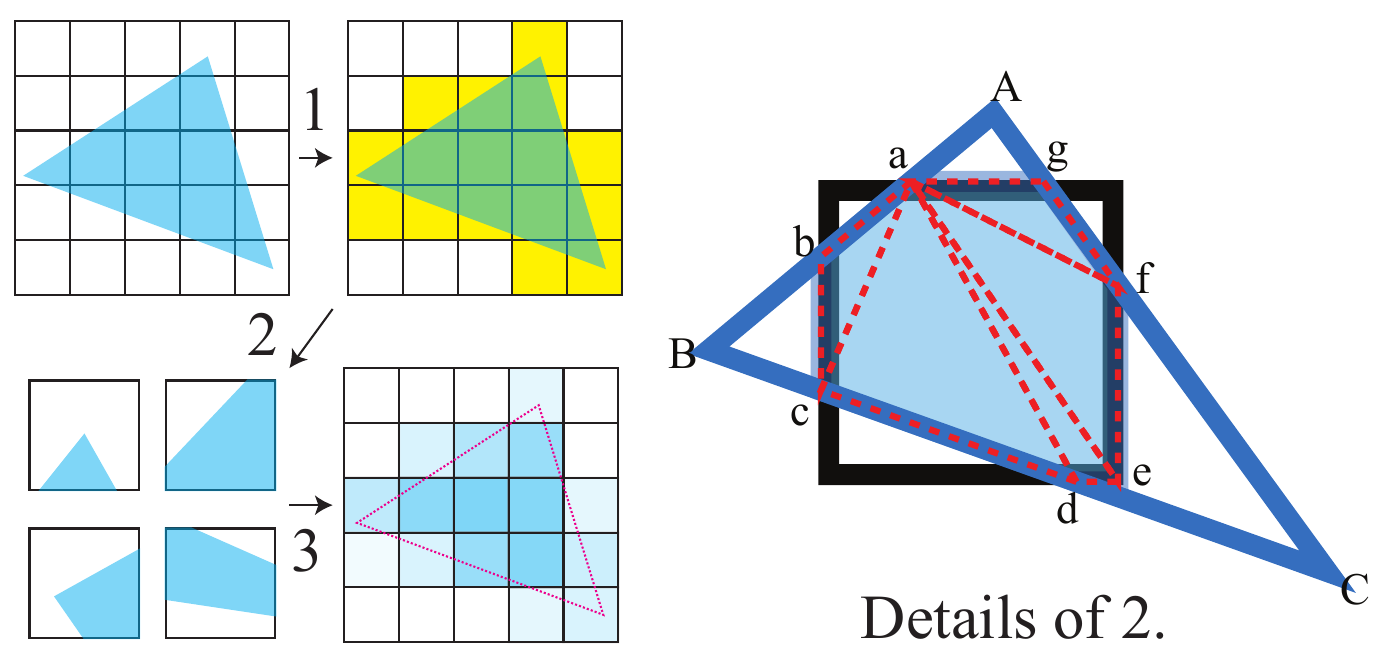}
    \caption{\textbf{Implementation of anti-aliasing in our differentiable renderer.} On the left, we show the process of anti-aliasing: 1) Find pixels that overlap with the given triangle, 2) Find the area that each pixel overlaps with the given triangle, and 3) Determine the color of each pixel based on the overlap area. On the right, we show details of the step 2.}
    \label{fig:aa-impl}
    \vspace{-1em}
\end{figure}

We use $\rho(F, P)$ to determine the opacity of the face $F$ at the pixel $P$. If the opacity of $F$ is $\alpha(F)$, we compute the face opacity at pixel $P$, $\alpha(F, P)$, as:
\begin{equation}
    \alpha(F, P) = \alpha(F) \cdot \rho(F, P) \le \alpha(F).
\end{equation}

Thus, the opacity of $F$ at $P$ is proportional to the overlapping area between the triangle and the pixel.

In the right figure of~\cref{fig:aa-impl}, we illustrate the process of computing the overlapping area $A(F, P)$. The vertices of $F$ are projected onto the image plane and visited in counterclockwise order (e.g., A - B - C - A in the illustration). We then find the intersection points between the triangle edges and the pixel boundaries. These intersection points form the vertices of the (convex) overlapping polygon.

For example, vertices (a) and (b) are found by calculating the intersections of $\overline{AB}$ with the pixel boundaries. The vertices of the overlapping polygon are stored in counterclockwise order, and the polygon is subdivided into a set of sub-triangles, as shown by the dotted red lines in the visualization. The total area of the overlapping polygon is obtained by summing the areas of the sub-triangles.

Using this enhanced differentiable renderer, we render multi-view images and compute the $L_1$ loss between the ground truth images as the reconstruction loss, $L_{recon}$. 

\subsubsection{Triangle Quality Loss ($L_{qual}$)}

To improve the triangle quality of the final mesh, we adopt the triangle quality loss ($L_{qual}$) of DMesh~\citep{son2024dmesh}. Specifically, the loss is defined as follows:
\begin{equation}
    L_{qual} = \frac{1}{|\mathbb{F}|}\sum_{F \in \mathbb{F}} AR(F)\cdot \Lambda(F),
\end{equation}

where $\mathbb{F}$ is the set of every face combination we consider, $AR(\cdot)$ is a function that computes aspect ratio of the given face, and $\Lambda(\cdot)$ is the face probability function defined in~\cref{sec:preliminary}.

\subsubsection{Real Loss ($L_{real}$)}

We minimize the sum of point-wise real values ($\psi)$, so that we can remove redundant faces as much as possible during optimization. The loss $L_{real}$ is simply defined as:
\begin{equation}
    L_{real} = \frac{1}{|\mathbb{P}|}\sum_{p \in \mathbb{P}} \Psi(p),
\end{equation}

where $\Psi(\cdot)$ is the function that returns the real value of the given point, as defined in~\cref{sec:preliminary}.

\subsection{Reconstruction Steps}

Now we provide detailed explanations about each step in the reconstruction process.

\subsubsection{Step 1: Initialization}
\label{appendix:pipeline-1-init}

We initialize our point features differently for two different scenarios: when sample point cloud is given or not. 

\paragraph{Point Cloud Init.} 

When a point cloud sampled from the target shape is given, we can initialize our point features using the point cloud, so that the initial configuration already captures the overall structure of the target shape (\cref{fig:tsne}). Specifically, for the given point cloud, we estimate the density of the point cloud by computing the distance to the nearest point for each point. Then, we down sample the point cloud using a voxel grid, of which size is defined as the point cloud density, to remove redundant points and holes. Finally, we follow the initialization scheme of DMesh~\citep{son2024dmesh} using the down sampled point cloud.

\paragraph{Regular Grid Init.}

If we do not have any prior knowledge about the target shape, we first organize the points in a regular grid, ensuring that every face in the grid satisfies the \emph{Minimum-Ball} condition (Definition~\ref{def:min-ball}), and initialize the point-wise real values ($\psi$) with additional features (\eg colors). This regular grid guarantees that the faces observed in this step will also be observable in the subsequent step, where the \emph{Minimum-Ball} algorithm determines face existence. For $d = 2$, this condition is satisfied by forming every triangle in the grid as an equilateral triangle. For $d = 3$, we use a body-centered cubic lattice. The grids are illustrated in~\cref{fig:grid}.

With these fixed points and faces, we formulate the final loss by setting $\lambda_{qual} = 0$ and $\lambda_{real}=10^{-4}$ in~\cref{eq:final-loss}, and minimize it to determine which faces to include in the final mesh. After optimization, we collect points with real values larger than $0.01$ to ensure that as many faces as possible are available for the next optimization step, thereby reducing the risk of holes in the surface. 

\begin{figure}[t]
    \centering
    \includegraphics[width=1.0\linewidth]{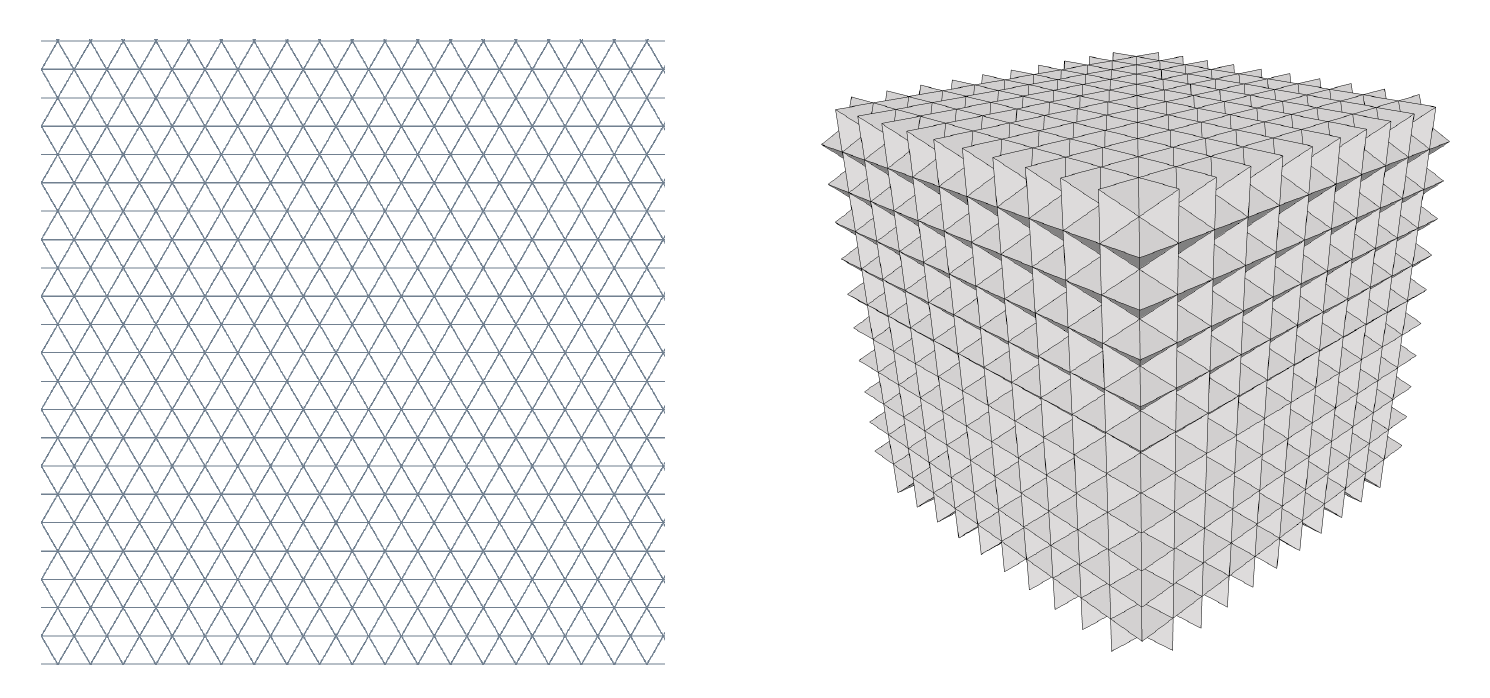}
    \caption{\textbf{Grid structure to initialize real values in 2D (left) and 3D (right).} Every face in the grid structure satisfies \emph{Minimum-Ball} condition (Definition~\ref{def:min-ball}).}
    \label{fig:grid}
\vspace{-1em}
\end{figure}

\subsubsection{Step 2: Position Optimization}
\label{appendix:pipeline-2-pos-optim}

In this step, we fix the point-wise real values ($\psi$) and optimize only the point positions. For clarity, we formally describe the process in~\cref{alg:pos-optim}, and explain the algorithm line by line below:

\begin{itemize}
    \item \textbf{Lines 1-2:} For the given set of points, we denote their positions as $\mathbb{P}$ and their real values as $\Psi$.
    \item \textbf{Line 3:} We define the total number of optimization steps as $n_0$.
    \item \textbf{Line 4:} We define the number of optimization steps required to refresh query faces and their nearest neighbor cache as $n_1$. Since point positions are optimized, the point configuration evolves during optimization, potentially leading to the emergence of new faces that were previously unobservable. To account for these changes, we refresh the query faces periodically.
    \item \textbf{Line 5:} We denote the number of nearest neighbors to store in the cache for the query faces as $K$.
    \item \textbf{Lines 6-7:} The optimization process runs for $n_0$ steps.
    \item \textbf{Lines 8-12:} At every $n_1$ step, we update the query faces based on the current point configuration. 

    In the \texttt{Update-Query-Faces} function, which uses point positions and their real values, we:
    \begin{itemize}
        \item Extract points with a real value of $1$.
        \item For each extracted point, find its 10-nearest neighbors that also have a real value of $1$, since any face containing a point with a real value of $0$ is considered non-existent.
        \item Perform Delaunay Triangulation (DT) for the entire point set and collect faces in DT where all points have a real value of $1$. This ensures the inclusion of as many faces as possible during optimization, helping to eliminate potential holes later.
    \end{itemize}

    For the updated query faces, we compute the centers of their minimum bounding balls. Subsequently, we identify the $K$-nearest neighbors of these centers in $\mathbb{P}$ and store this information in the nearest neighbor cache $\mathbb{C}$.

\begin{algorithm}[t]
\caption{\texttt{Position Optimization}}
\label{alg:pos-optim}
\begin{algorithmic}[1]
    \State $\mathbb{P}, \Psi \gets$ Set of points and their real values
    \State $\alpha_{min} \gets$ Coefficient for sigmoid function
    \State $n_{0} \gets$ Number of optimization steps
    \State $n_{1} \gets$ Number of refresh steps for query faces
    \State $K \gets$ Number of nearest neighbors to store in cache
    \State $i \gets 0$
    \While{$i < n_0$}
        \If{$i \mod n_1 = 0$}
            \State $\mathbb{F} \gets$ \texttt{Update-Query-Faces($\mathbb{P}, \Psi$)}
            \State $B_{\mathbb{F}}^{c}, B_{\mathbb{F}}^{r} \gets$ \texttt{Compute-Minimum-Ball}($\mathbb{P, F}$)
            \State $\mathbb{C} \gets$ \texttt{Find-KNN}($B_{\mathbb{F}}^{c}, \mathbb{P}$, $K$)
        \EndIf
        \State $B_{\mathbb{F}}^{c}, B_{\mathbb{F}}^{r} \gets$ \texttt{Compute-Minimum-Ball}($\mathbb{P, F}$)
        \State $P^{nearest}_{\mathbb{F}} \gets$ \texttt{Find-NN-CACHE}($B_{\mathbb{F}}^{c}, \mathbb{C}$)
        \State $d(B_{\mathbb{F}}, \mathbb{P}) \gets B_{\mathbb{F}}^{r} - ||P^{nearest}_{\mathbb{F}} - B^{c}_{\mathbb{F}}||$
        \State $\lambda_{min}(\mathbb{F}) \gets \sigma(d(B_{\mathbb{F}}, \mathbb{P}) \cdot \alpha_{min})$
        \State $\lambda(\mathbb{F}) \gets \lambda_{min}(\mathbb{F})$
        \State $L \gets$ \texttt{Compute-Loss}($\mathbb{P, F}, \lambda(\mathbb{F})$)
        \State Update $\mathbb{P}$ to minimize $L$
        \State $i \gets i + 1$
    \EndWhile
\end{algorithmic}
\end{algorithm}

    \item \textbf{Lines 13-16:} Using the current point configuration, we compute the minimum bounding balls ($B_{\mathbb{F}}$) for the query faces. For each bounding ball center, we find the nearest neighbor in the nearest neighbor cache $\mathbb{C}$ by calculating the distances to points in $\mathbb{C}$ and selecting the closest one. We then compute the signed distance $d(B_{\mathbb{F}}, \mathbb{P})$ for the query faces and use it to get the probability $\lambda_{min}(\mathbb{F})$.

    \item \textbf{Line 17:} Since the query faces consist only of points with a real value of $1$, we set the final face probability $\lambda(\mathbb{F})$ to be the same as $\lambda_{min}(\mathbb{F})$ (\cref{sec:preliminary}).

    \item \textbf{Line 18:} Based on the point positions, query faces, and their existence probabilities, we compute the loss $L$ to minimize following~\cref{eq:final-loss}. 
    
    \item \textbf{Lines 19-20:} Finally, we update the point positions $\mathbb{P}$ to minimize $L$ and iterate the process.
\end{itemize}

\subsubsection{Step 3: Real Value Optimization}
\label{appendix:pipeline-4-real-optim}

In this step, we re-optimize the point-wise real values while keeping the point positions fixed. From the current point configuration, we identify all faces in the Delaunay Triangulation (DT) that satisfy the \emph{Minimum-Ball} condition. Note that any face satisfying this condition must exist in the DT (Lemma~\ref{lemma:min-ball-dt}). Thus, we first compute the DT of the points and then verify whether each face in the DT satisfies the \emph{Minimum-Ball} condition.

Next, we follow a similar optimization process to Step 1 (Appendix~\ref{appendix:pipeline-1-init}). Additionally, if it was the multi-view reconstruction task, we remove invisible faces to remove redundant faces as much as possible. If this was the last epoch, we return the post-processed mesh. 

\begin{figure}[t]
    \centering
    \includegraphics[width=\linewidth]{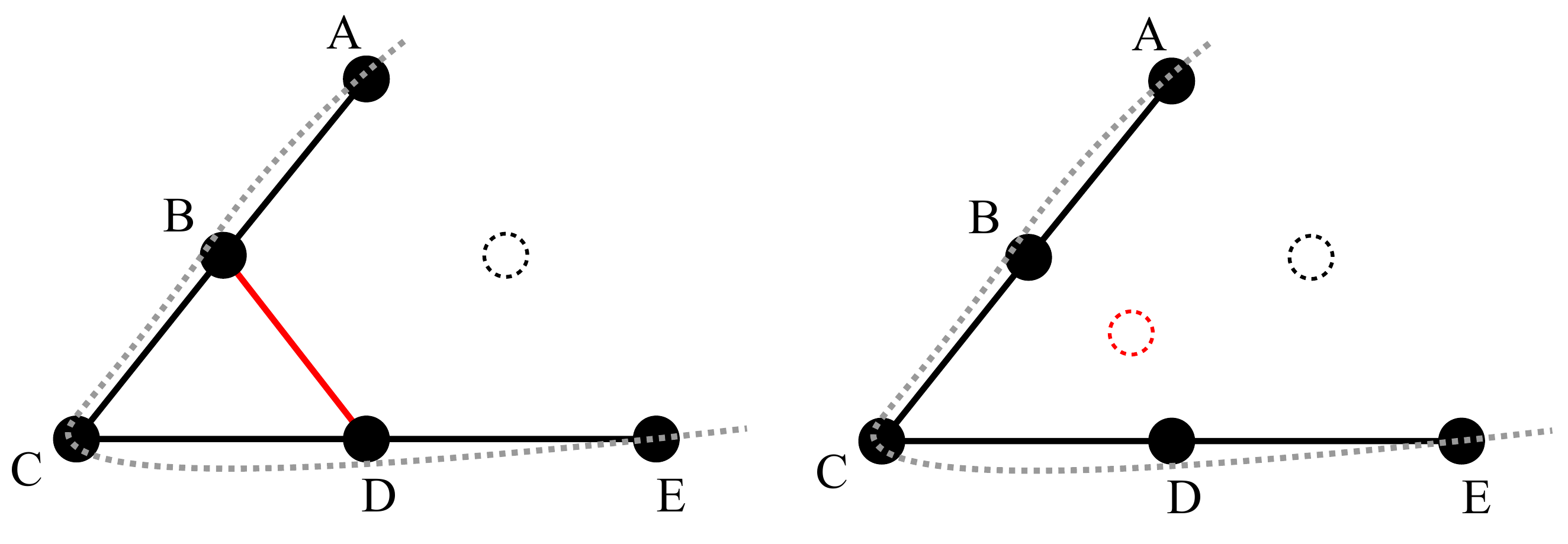}
    \caption{\textbf{Point insertion for removing undesirable face.} (Left) To reconstruct the \textcolor{gray}{ground truth shape}, we need to set the real value ($\psi$) of points A-E to 1. The point rendered with dotted line has real value of 0. Then, we observe unnecessary face \textcolor{red}{$\overline{BD}$} exists. (Right) To remove this face, we insert \textcolor{red}{additional point} that carries $0$ real value near the unnecessary face.}
    \label{fig:subdiv-ambiguity}
    \vspace{-1em}
\end{figure}

\subsubsection{Step 4: Subdivision}

\begin{wrapfigure}{r}{0.25\linewidth}
\vspace{-1em}
\hspace{-1em}
    \includegraphics[width=\linewidth]{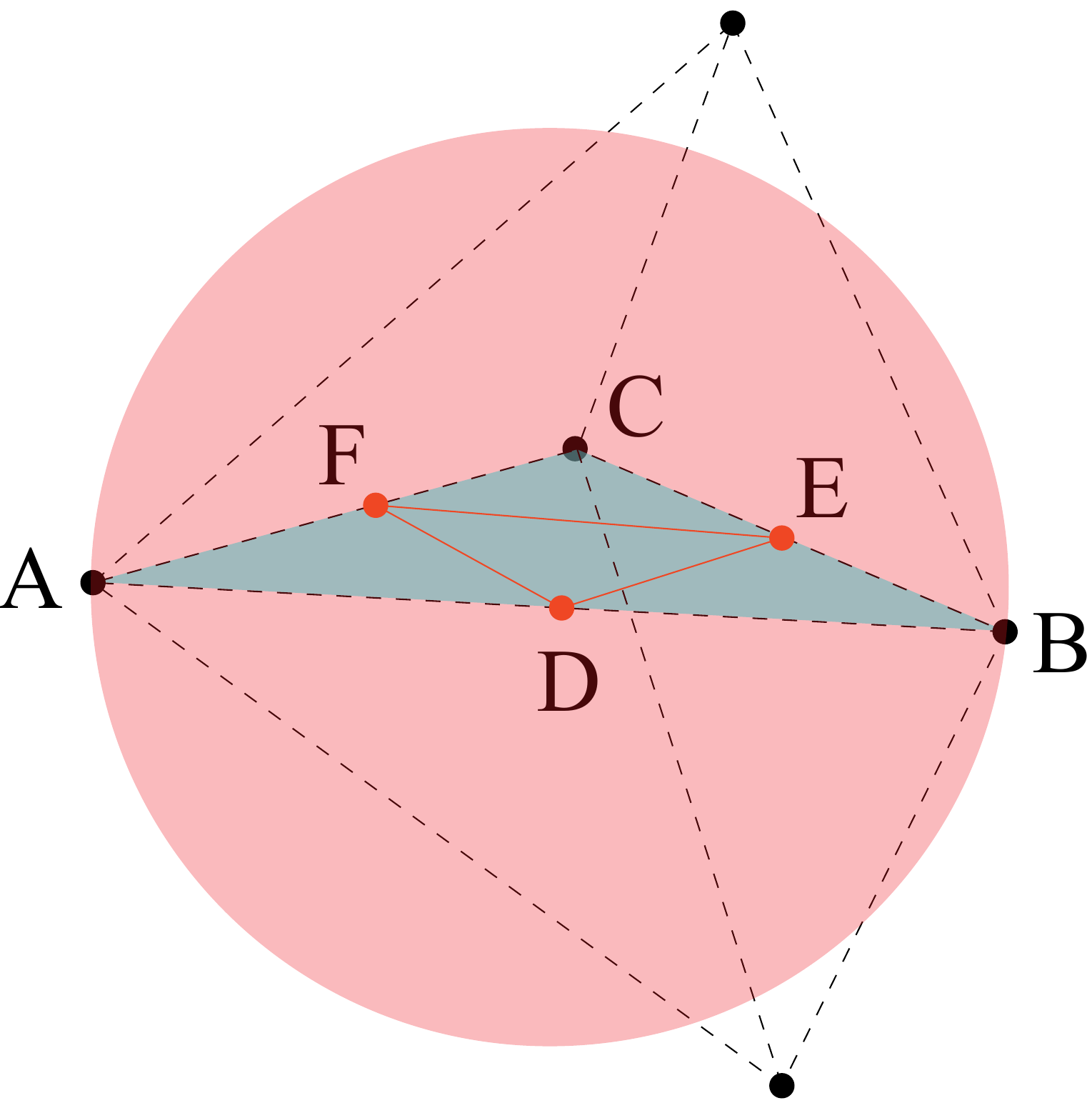}
    \label{fig:enter-label}
\vspace{-1em}
\end{wrapfigure}

To reconstruct fine geometric details of the given shape, we subdivide the current mesh mainly by adding points with $\psi = 1$ at the middle of edges that are adjacent to currently existing faces. In the inset, points $E, D, F$ are the newly inserted points. They form 4 sub-faces with $A, B, C$, and they all satisfy Definition~\ref{def:min-ball}. Therefore, we can guarantee that these sub-faces will exist at the start of next epoch. Note that this guarantee does not hold for WDT.

At the same time, it is also possible to insert additional points into faces that \emph{should not} exist in the next epoch, effectively removing such faces at the start of the next epoch. For example, during the real value optimization step in the pipeline (\cref{fig:pipeline}, Appendix~\ref{appendix:pipeline-4-real-optim}), invisible faces are removed for multi-view reconstruction task. After optimization, we may observe removed faces with all their points have a real value of $1.0$, creating a contradiction. This situation could arise due to ambiguities in the mesh definition, as illustrated in~\cref{fig:subdiv-ambiguity}.

To eliminate these undesirable faces, additional points with a real value of $0$ are inserted at their circumcenters. Consequently, after subdivision, several holes may appear on the surface because these additional points might also be inserted into faces that \emph{should} exist (\cref{fig:pipeline}). However, most of these holes are resolved during subsequent optimization steps.

\section{Experimental Details and Additional Results}
\label{appendix:exp-details}

In this section, we outline the experimental settings used for the results in~\cref{sec:experiments} and present additional results to support our claims.

\subsection{Dataset}

Here, we provide details on the datasets described in~\cref{sec:exp-recon-task}.

\subsubsection{Font}

We used four font styles: Pacifico, Permanent-Marker, Playfair-Display, and Roboto.

\subsubsection{Thingi10K}

We manually selected 10 closed surfaces and 10 open surfaces from the Thingi10K dataset~\citep{zhou2016thingi10k}. Specifically, we used the following models, denoted by their file IDs:

\begin{itemize}
    \item \textbf{Closed surfaces:} 47926, 68380, 75147, 80650, 98576, 101582, 135730, 274379, 331105, and 372055.
    \item \textbf{Open surfaces:} 40009, 41909, 73058, 82541, 85538, 131487, 75846, 76278, 73421, and 106619.
\end{itemize}

These models were chosen because they exhibit minimal self-occlusions, enabling dense observations from multi-view images. Additionally, we randomly selected 500 models and used for comparisons.

\subsubsection{Objaverse}

We manually selected 30 mesh models that exhibit diverse topology from Objaverse~\citep{deitke2023objaverse}, which include both closed and open surfaces, and also have small scenes. Some of these models are rendered in~\cref{fig:tsne,fig:pipeline,fig:3d-pc-qual,fig:3d-mv-qual-closed,fig:3d-pc-qual-closed}.

\subsection{2D Point Cloud Reconstruction}
\label{appendix:2d-exp-details}

\subsubsection{Hyperparameters}

\paragraph{DMesh++}

\begin{itemize}
    \item Learning rate (real value, $\psi$): $0.3$
    \item Learning rate (position): $0.001$
    \item Number of epochs: 1
    \item Number of optimization steps
    \begin{itemize}
        \item Step 1 (Real value initialization): 100
        \item Step 2 (Point position optimization): 500
    \end{itemize}
\end{itemize}

\subsubsection{Reconstruction of Complex Drawings}
\label{appendix:2d-drawing-recon}

In~\cref{fig:supp-2d-drawing}, we present the reconstruction results for complex 2D drawings. As the figure illustrates, DMesh++ successfully reconstructs intricate 2D geometries from point clouds, even when the number of edges approaches nearly 1 million.

\begin{figure*}
    \centering
    \includegraphics[width=\linewidth]{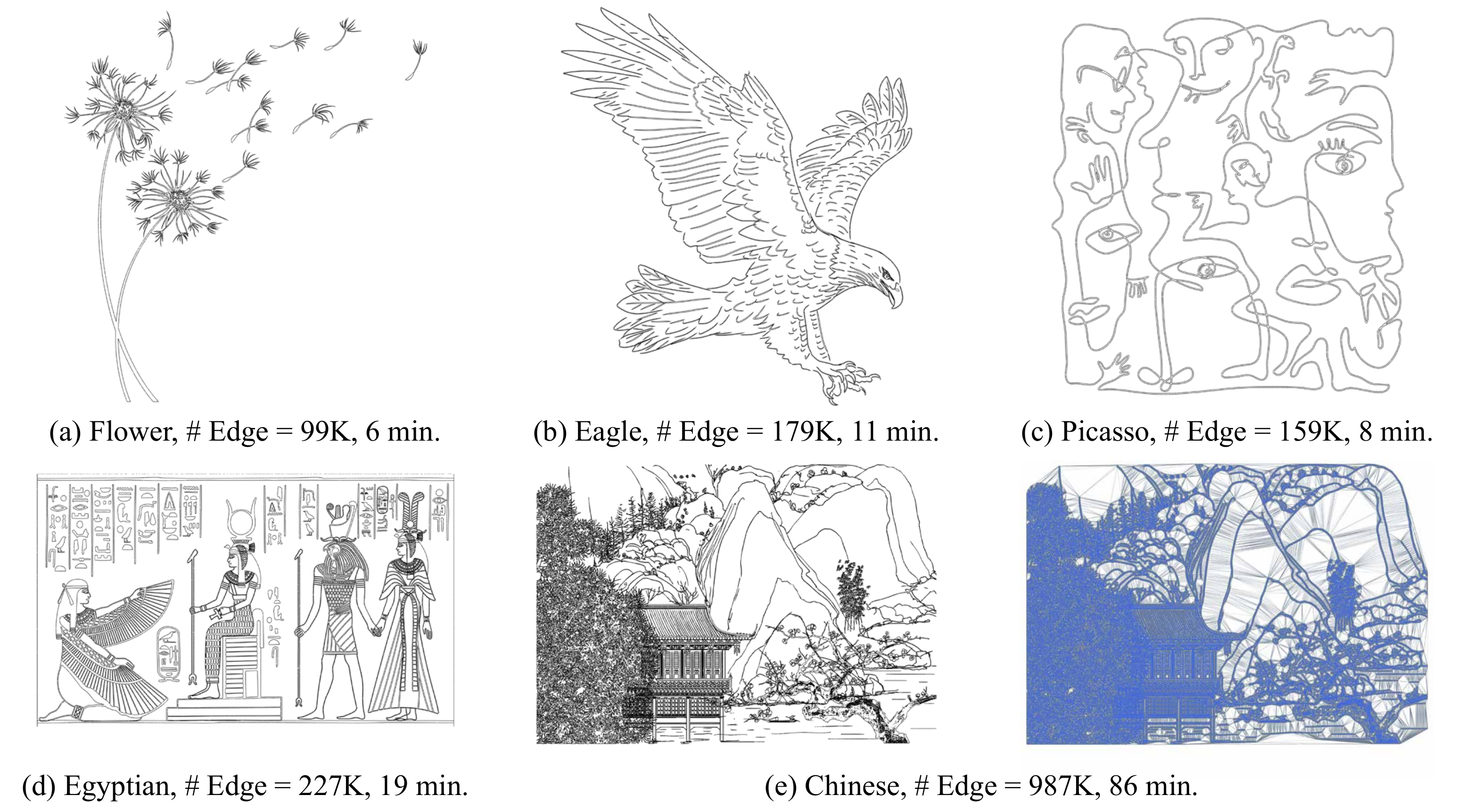}
    \caption{\textbf{2D point cloud reconstruction result for complex drawings.} For each drawing, we report both the number of edges and the reconstruction time. For the Chinese drawing, we additionally render the ``imaginary'' part on the right to clearly illustrate its complexity.}
    \label{fig:supp-2d-drawing}
    \vspace{-1.0em}
\end{figure*}

\subsection{3D Point Cloud Reconstruction}
\label{appendix:3d-pc-recon-details}

\subsubsection{Hyperparameters}

\paragraph{DMesh++}

\begin{itemize}
    \item Initial Grid Edge Length: $3\times$ input point cloud density
    \item Learning rate (position): 0.001
    \item Number of epochs: 1
    \item Number of optimization steps
    \begin{itemize}
        \item Step 2 (Point position optimization): 2000
        \item Step 3 (Real value optimization): 0
    \end{itemize}
\end{itemize}

\subsection{3D Multi-View Reconstruction}
\label{appendix:3d-exp-details}

\begin{figure*}
    \centering
    \includegraphics[width=\linewidth]{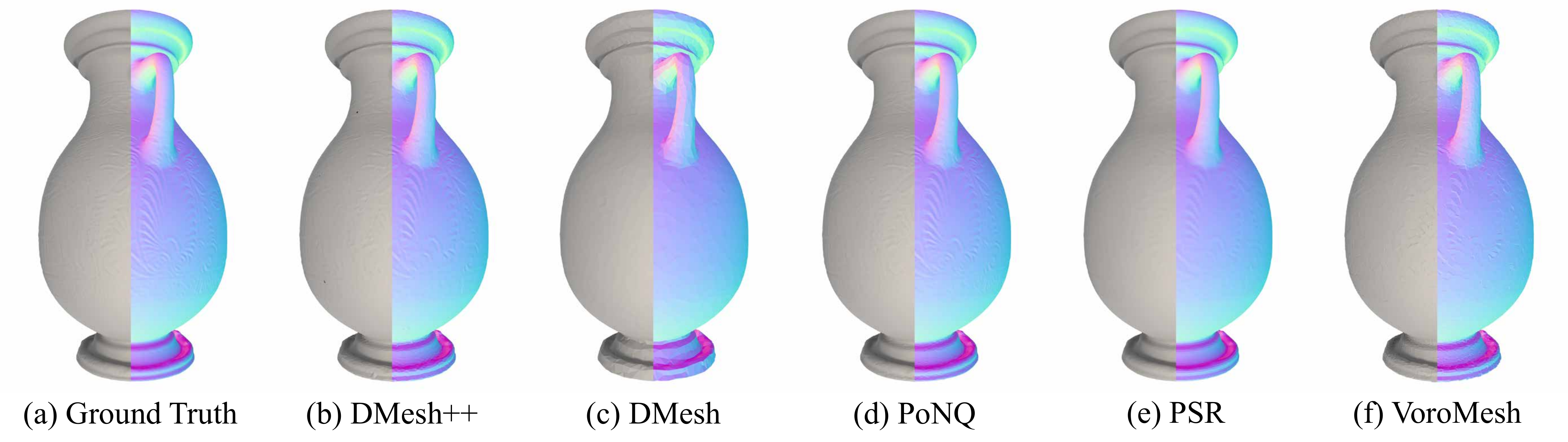}
    \caption{\textbf{Qualitative comparison of 3D point cloud reconstruction results for a closed surface (vase).} For each image, we render the view-point normal on the right, and the diffuse image on the left. Among the baseline methods that reconstruct watertight mesh from point clouds, PoNQ~\citep{maruani2024ponq} performs the best in reconstructing fine geometric details. While DMesh~\citep{son2024dmesh} fails at reconstructing such details due to the lack of mesh complexity, DMesh++ successfully recovers them and produce comparable result to PoNQ.}
    \label{fig:3d-pc-qual-closed}
\end{figure*}

\begin{figure*}
    \centering
    \includegraphics[width=\linewidth]{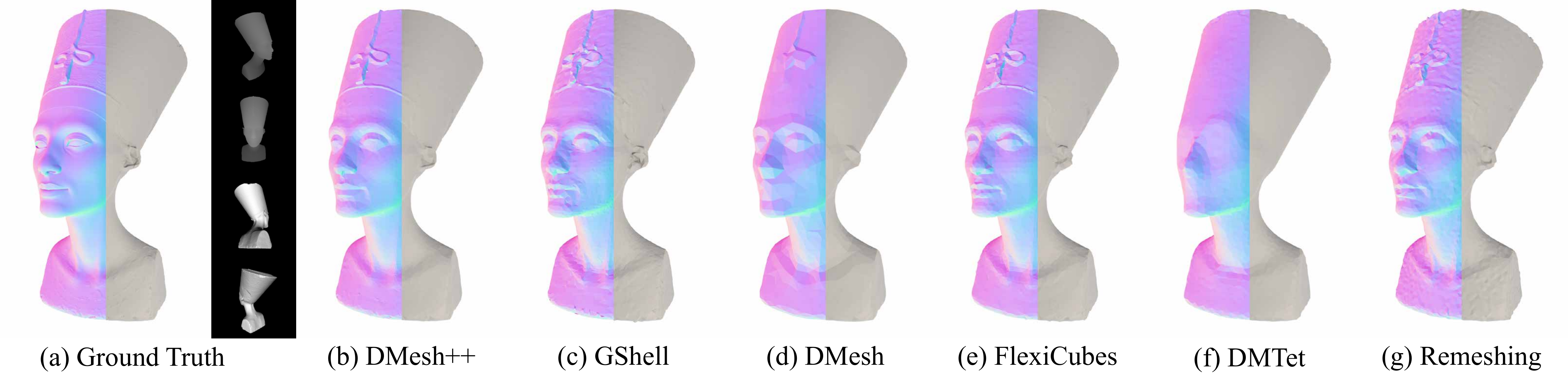}
    \caption{\textbf{Qualitative comparison of 3D multi-view reconstruction results for a closed surface (sculpture).} We render the input diffuse and depth images alongside the ground truth image. For each image, we render the view-point normal on the left, and the diffuse image on the right. We can observe that the reconstruction result of DMesh++ is as good as the other baseline methods that are optimized for closed surfaces.}
    \label{fig:3d-mv-qual-closed}
    \vspace{0em}
\end{figure*}

\subsubsection{Hyperparameters}

\noindent\textbf{Remeshing~\citep{palfinger2022continuous}}
\begin{itemize}
    \item Image Batch Size: 8
    \item Number of Optimization Steps: 1000
    \item Learning Rate: 0.1
    \item Edge Length Limits: [0.02, 0.15]
\end{itemize}

The "Edge Length Limits" were adjusted to produce meshes with a similar number of vertices and faces to other methods for a fair comparison.

\vspace{4pt}\noindent\textbf{DMTet~\citep{shen2021deep}}
\begin{itemize}
    \item Image Batch Size: 8
    \item Number of Optimization Steps: 5000
    \item Learning Rate: 0.001
    \item Grid Resolution: 128
\end{itemize}

The SDF was initialized to a sphere, as in the original implementation, before starting optimization.

\vspace{4pt}\noindent\textbf{FlexiCubes~\citep{shen2023flexible}}
\begin{itemize}
    \item Image Batch Size: 8
    \item Number of Optimization Steps: 2000
    \item Number of Warm-up Steps: 1500
    \item Learning Rate: 0.01
    \item Grid Resolution: 80
    \item Triangle Aspect Ratio Loss Weight: 0.01
\end{itemize}

The SDF was initialized randomly, following the original implementation. To improve the quality of the output mesh, we adopted a triangle aspect ratio loss, designed to minimize the average aspect ratio of triangles in the mesh. The mesh was first optimized for 1500 steps as a warm-up without the triangle aspect ratio loss, followed by 500 steps with the additional loss. 

Additionally, we observed that the output mesh often included false internal structures, which significantly degraded the Chamfer Distance (CD) compared to the ground truth mesh. To mitigate this, we performed a visibility test on the output mesh to remove these false internal structures as much as possible.

\vspace{4pt}\noindent\textbf{GShell~\citep{liu2023ghost}}

\begin{itemize}
    \item Image Batch Size: 8
    \item Number of Optimization Steps: 5000
    \item Number of Warm-up Steps: 4500
    \item Learning Rate: 0.01
    \item Grid Resolution: 80
    \item Triangle Aspect Ratio Loss Weight: 0.0001
\end{itemize}

To enhance the quality of the output mesh, we employed the same additional measures as FlexiCubes. We found that longer optimization steps were required for GShell compared to FlexiCubes to effectively handle open surfaces.

\vspace{4pt}\noindent\textbf{DMesh++ Settings}

\begin{itemize}
    \item Initial Grid Edge Length: 0.05
    \item Learning Rate (Real Value, $\psi$): 0.01
    \item Learning Rate (Position): 0.001
    \item Number of Epochs: 2
    \begin{itemize}
        \item Image Res. / Batch Size at Epoch 1: (256, 256), 1
        \item Image Res. / Batch Size at Epoch 2: (512, 512), 1
    \end{itemize}
    \item Number of Optimization Steps
    \begin{itemize}
        \item Step 1 (Real Value Initialization): 1000
        \item Step 2 (Point Position Optimization): 2000
        \item Step 3 (Real Value Optimization): 1000
    \end{itemize}
\end{itemize}

In the first epoch, we used lower-resolution images as part of a coarse-to-fine approach.

\subsubsection{Limitations}
\label{appendix:mv-limit}

Despite DMesh++'s success in reconstructing geometrically accurate meshes from multi-view images (of a synthetic object or scene), it currently cannot recover meshes from real-world images. This limitation stems from an inadequate rendering model — our current per-vertex color model is too simple to capture detailed geometry, and our algorithm assumes full knowledge of lighting conditions, which is not available in real-world scenarios.

\begin{figure}
    \centering
    \includegraphics[width=\linewidth]{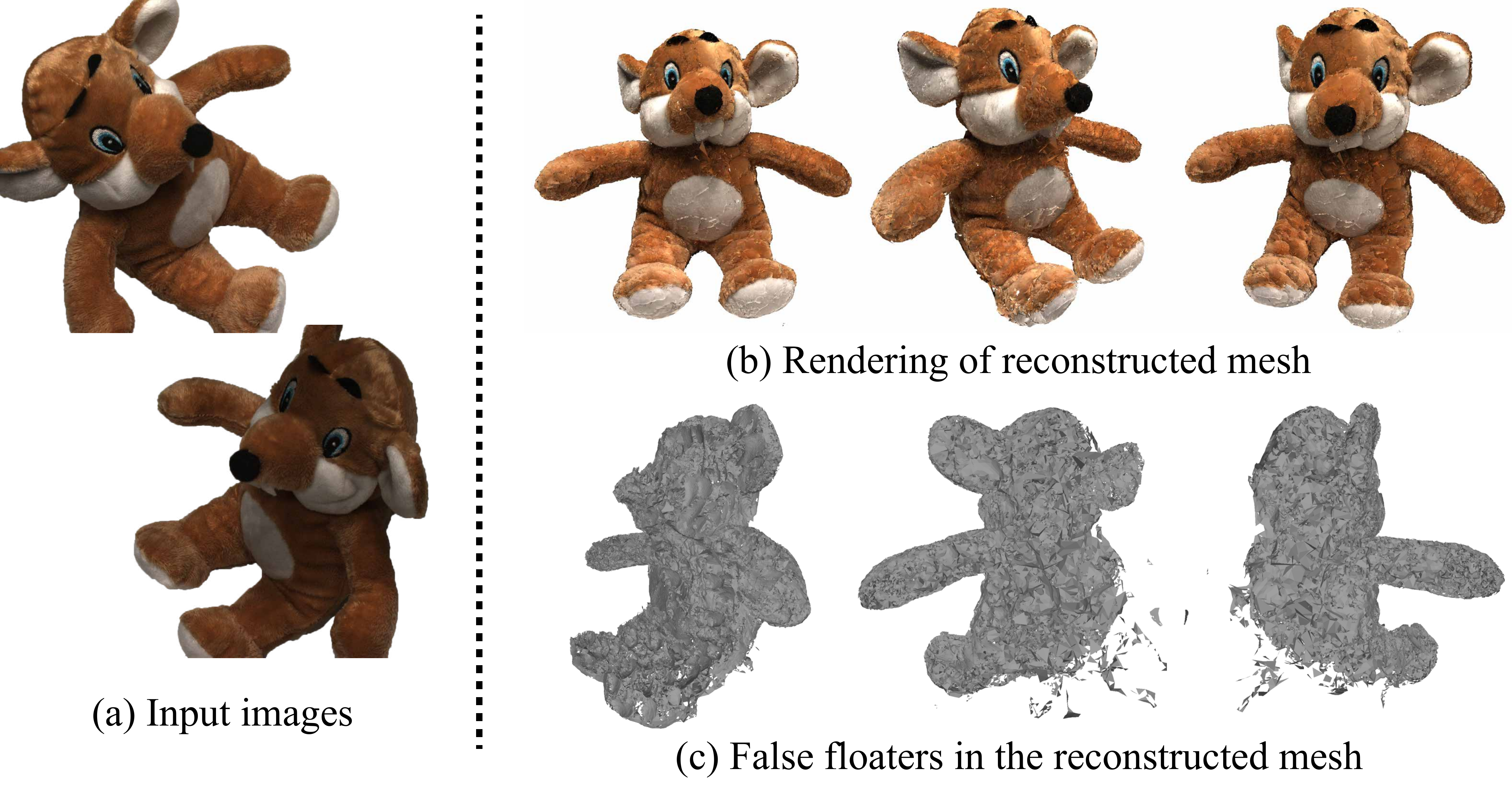}
    \caption{\textbf{3D reconstruction from real-world images in DTU dataset~\citep{jensen2014large}.} The input images are shown on left, and the reconstructed mesh is shown on right.}
    \label{fig:dtu-result}
    \vspace{-1em}
\end{figure}

To illustrate this, we applied our reconstruction algorithm to real-world images from the DTU dataset~\citep{jensen2014large}, as shown in~\cref{fig:dtu-result}. While our method approximates the real-world images (\cref{fig:dtu-result}(b)), the extracted mesh exhibits numerous false floaters (\cref{fig:dtu-result}(c)). We believe this suboptimal result is due to the lack of proper rendering models and regularizations, and addressing this issue by integrating DMesh++ with other reconstruction mechanisms is an exciting direction for future research, as discussed in~\cref{sec:discussion}.

\section{\textit{Reinforce-Ball} algorithm}
\label{appendix:rl-ball}

Here we introduce an experimental algorithm that further enhances DMesh++'s capability. As discussed in~\cref{sec:preliminary}, DMesh++ no longer uses the per-point weights found in DMesh~\citep{son2024dmesh}. In DMesh, optimizing per-point weights helps control mesh complexity: stronger regularization on these weights results in a simpler output mesh. Since DMesh++ lacks this mechanism, it cannot directly regulate mesh complexity during optimization. To address this limitation, we propose the \textit{Reinforce-Ball} algorithm, which reduces unnecessary faces while preserving essential geometric details.

\subsection{Local Minima of Weight Regularization}

Before delving into the details of the Reinforce-Ball algorithm, we first highlight a limitation of DMesh's per-point weight regularization. Specifically, while per-point weights are relevant for controlling mesh complexity, they alone cannot achieve adaptive resolution or produce a mesh that is both efficient and precise. Below, we explain the reasons in detail, assuming that per-point probabilities are optimized and that the \textit{Minimum-Ball} condition is employed to compute face probabilities.

In~\cref{fig:render-bias-rl}, we provide an example in a rendering scenario. A camera is placed on the left, and three different probabilistic meshes are shown on the right.

In \textbf{case (1)}, there are three points: A, B, and C. By connecting points A and B, the ground truth shape can be perfectly reconstructed, making point C redundant. Assume the optimization starts from this state, where all points have an existence probability of $1.0$. According to the \emph{Minimum-Ball} condition, the probabilities of faces $\overline{AC}$ and $\overline{BC}$ will also be $1.0$. In this scenario, if a ray from the camera intersects the mesh, the accumulated opacity will be $1.0$, representing a fully opaque surface. Consequently, the reconstruction loss will be $0.0$, as the fully opaque faces perfectly match the ground truth.

\begin{figure}
    \centering
    \includegraphics[width=\linewidth]{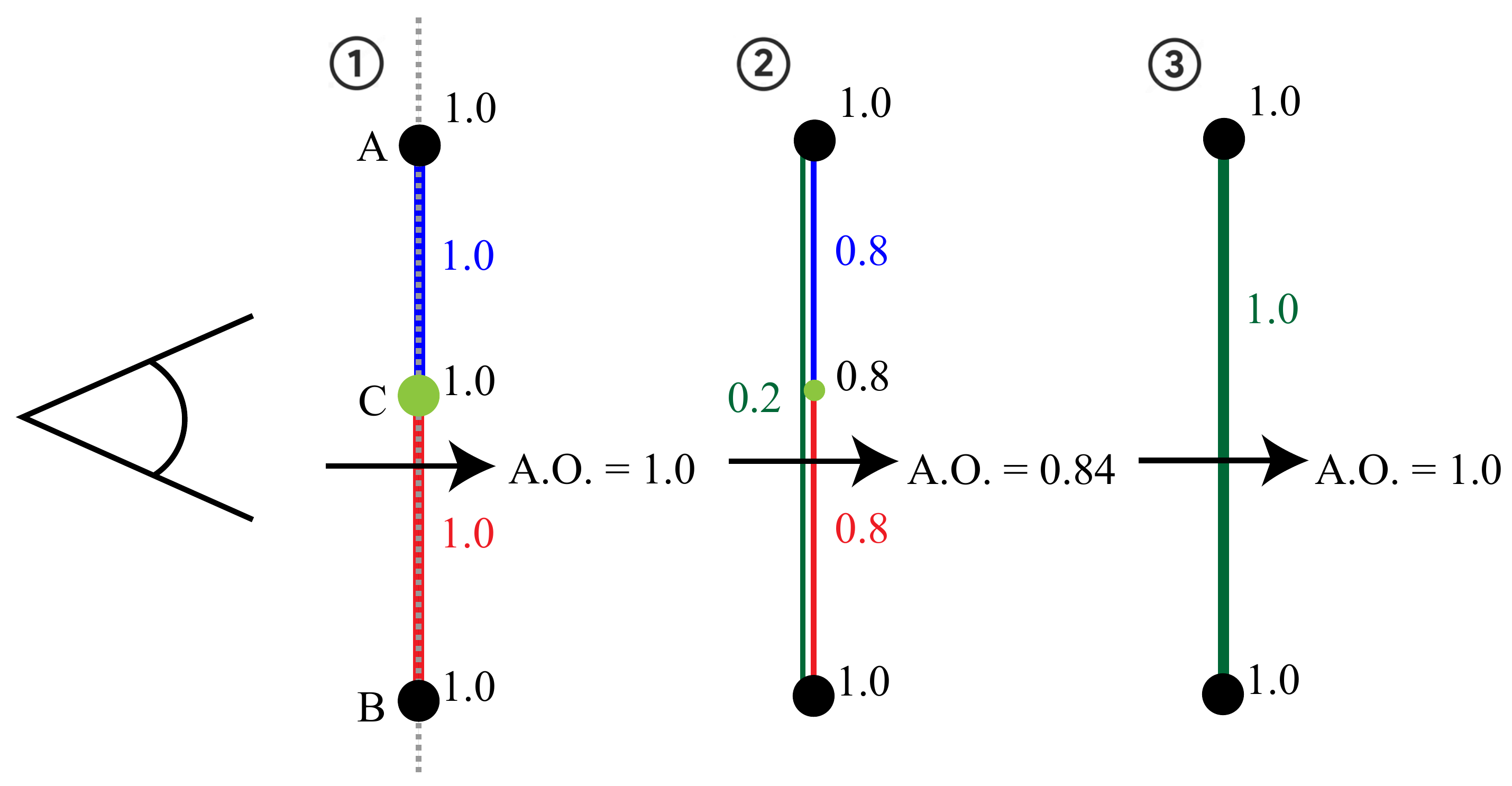}
    \caption{\textbf{Local minima of weight regularization in a rendering setting.} (1) The ground truth geometry is rendered in \textcolor{gray}{gray} dotted line. There are 3 points (A, B, \textcolor{LimeGreen}{C}), where only the end points (A, B) are necessary for fully representing the underlying shape. Every point has weight $1.0$, which is written to the next of each point. In this case, faces \textcolor{blue}{$\overline{AC}$} and \textcolor{red}{$\overline{BC}$} exist with probability $1.0$, which corresponds to their opacity. In this case, for a ray that goes through this mesh, the accumulated opacity (A.O.) becomes $1.0$, and the reconstruction loss is $0$. (2) When weight regularization reduces the weight of (redundant) \textcolor{LimeGreen}{$C$} to $0.8$, the probability of faces \textcolor{blue}{$\overline{AC}$} and \textcolor{red}{$\overline{BC}$} becomes $0.8$, and that of \textcolor{ForestGreen}{$\overline{AB}$} becomes $0.2$. However, in this case, the accumulated opacity of the same ray becomes $0.84$, which results in non-zero reconstruction loss. (3) Therefore, with a small weight regularization, we cannot remove \textcolor{LimeGreen}{C} to get this optimal mesh, which contains only \textcolor{ForestGreen}{$\overline{AB}$}, and attains $0$ reconstruction loss.}
    \label{fig:render-bias-rl}
    \vspace{-1.0em}
\end{figure}

In \textbf{case (3)}, the optimal configuration is rendered, where the redundant point C is removed. The probability of face $\overline{AB}$ becomes $1.0$, making it fully opaque. Again, the reconstruction loss is $0.0$.

In \textbf{case (2)}, an intermediate state between cases (1) and (3) is rendered. Assume that the probability of point C is reduced to $0.8$ due to regularization. Consequently, the probabilities of faces $\overline{AC}$ and $\overline{BC}$ are also reduced to $0.8$ because one of their endpoints, C, has a probability of $0.8$. Simultaneously, the probability of face $\overline{AB}$ increases from $0$ to $0.2$, as the probability of point C, which lies inside the minimum bounding ball of the face, is $0.8$. 

Now, consider a camera ray passing through $\overline{AB}$ and $\overline{BC}$ sequentially (the order does not matter due to their tight overlap). Using alpha blending, the accumulated opacity is computed as:
\begin{equation}
    \text{Accumulated Opacity: } 0.2 + (1.0 - 0.2) \cdot 0.8 = 0.84.
\end{equation}
This calculation shows that the accumulated opacity is reduced to $0.84$. 

The key issue arises from the \textbf{dependency} between the probabilities of $\overline{AB}$, $\overline{AC}$, and $\overline{BC}$. In the above formulation, the term $(1.0 - 0.2) \cdot 0.8$ represents the probability that the ray misses $\overline{AB}$ and hits $\overline{BC}$. If the probabilities of $\overline{AB}$ and $\overline{BC}$ were independent, this formulation would be correct. However, they are dependent: in fact, the probability of $\overline{BC}$ equals $1.0 - \overline{AB}$ because both depend on the probability of C. Thus, the actual accumulated opacity should be:
\begin{equation}
    0.2 + (1.0 - 0.2) \cdot 1.0 = 1.0.
\end{equation}
However, the alpha blending technique used here does not account for such dependencies, leading to a reduction in accumulated opacity. This reduction artificially increases the reconstruction loss. To minimize the loss, the optimizer increases the probability of C again, preventing convergence to the optimal case (3).

This dependency issue creates a local minimum that the previous formulation cannot overcome. This is why we propose the \emph{Reinforce-Ball} algorithm.

\begin{figure}[t]
    \centering
    \includegraphics[width=\linewidth]{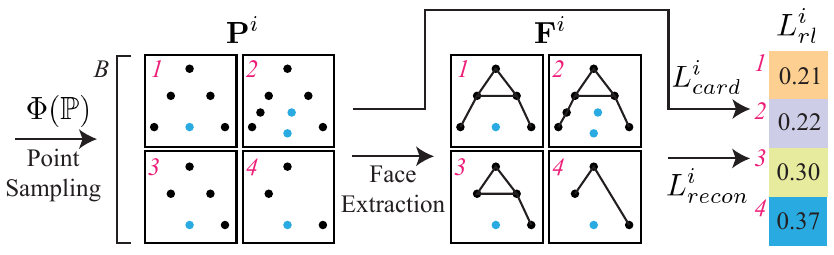}
    \caption{\textbf{Overview of \emph{Reinforce-Ball} Algorithm.} Based on per-point existence probability ($\Phi(\mathbb{P})$), we sample points for $B$ number of batches ($\mathbf{P}^i$). Here we use $B=4$, and assume we are reconstructing shape ``A''. The points with $\psi=1$ are rendered in black, while those with $\psi=0$ are rendered in blue. Then, we identify existing faces in each batch ($\mathbf{F}^{i}$) based on~\cref{eq:tessellation_ours}. With $\mathbf{P}^{i}$ and $\mathbf{F}^{i}$, we compute loss for each batch. Note that the case (1, 2) are better than (3, 4), because they reconstruct the shape better ($L^{i}_{recon}$). Also, the case (1) is better than (2), because it has less number of points ($L^{i}_{card}$). To minimize the expected loss ($\mathbb{E}[L_{rl}]$), we should maximize the probability to sample the case 1. We optimize $\Phi(\mathbb{P})$ to do that.}
    \label{fig:rl-ball}
    \vspace{-1em}
\end{figure}

\subsection{Algorithm Overview}
\label{appendix:rl-ball-overview}

In the \emph{Reinforce-Ball} algorithm, we define per-point existence probability and optimize it using stochastic optimization technique~\citep{williams1992simple}. The overview of this algorithm is given in~\cref{fig:rl-ball}.

To elaborate, for a point $p \in \mathbb{P}$, let us denote the probability of it as $\phi(p) \in [0, 1]$, and concatenation of them as $\Phi(\mathbb{P})$. Then, assuming we sample points independently, we can sample a set of points $\mathbf{P}$ from $\Phi(\mathbb{P})$ and compute its probability as follows: 
\begin{equation}
\label{eq:rl-sample-point-prob}
    P(\mathbf{P} | \Phi(\mathbb{P})) = \Pi_{p \in \mathbf{P}} \phi(p) \cdot \Pi_{p \in \mathbb{P} - \mathbf{P}} (1 - \phi(p)).
\end{equation}

Now, we sample points for $B$ batches, and denote the sample points for $i$-th batch as $\mathbf{P}^i$. Based on $\mathbf{P}^{i}$ and tessellation function in~\cref{eq:tessellation_ours}, we can find out which faces exist for the $i$-th batch. Importantly, this process does not require evaluating all possible global face combinations; instead, it focuses only on local combinations, leveraging the \emph{minimum-ball} condition in the tessellation function. We write these faces as $\mathbf{F}^{i}$, and use them for computing reconstruction loss for $i$-th batch ($L^{i}_{recon}$). We also compute ``cardinality'' loss for $i$-th batch ($L^{i}_{card}$), which is just the number of sampled points ($|\mathbf{P}^{i}|$). Then, we can compute the loss $L^{i}_{rl}$ as
\begin{equation}
    L^{i}_{rl} = L^{i}_{recon} + \epsilon_{card} \cdot L^{i}_{card},
\end{equation}

\noindent
where $\epsilon_{card}$ is a small tunable hyperparameter to adjust the weight of the cardinality loss. If we write the final loss for a set of sampled points $\mathbf{P}$ as $L_{rl}(\mathbf{P})$, we aim at minimizing the expected loss:
\begin{equation}
    \mathbb{E}_{\mathbf{P}\sim \Phi(\mathbb{P})}L_{rl}(\mathbf{P}) = \sum P(\mathbf{P} | \Phi(\mathbb{P})) \cdot L_{rl}(\mathbf{P}).
\end{equation}

\subsection{Formal Definition}

\begin{algorithm}[t]
\caption{\texttt{Reinforce-Ball}}
\label{alg:rl-ball}
\begin{algorithmic}[1]
    \State $n_{0}, n_{1} \gets$ Number of epochs and optimization steps
    \State $B \gets$ Number of batch samples
    \State $\Phi \gets$ Per-point probabilities, initialized to 0.99
    \State $i \gets 0$

    \While{$i < n_{0}$}
        \State $\mathbb{F} \gets$ \texttt{Update-Query-Faces($\mathbb{P}, \Psi$)}
        \State $B_{\mathbb{F}} \gets$ \texttt{Compute-Minimum-Ball}($\mathbb{P, F}$)

        \State $j \gets 0$

        \While{$j < n_1$}
            \State \textit{(k = 1, ..., B)}
            
            \State $\mathbf{P}^{k} \gets$ \texttt{Sample-Points}($\mathbb{P}, \Phi$)
            
            \State $\mathbf{F}^{k} \gets$ \texttt{Get-Exist-Faces}($\mathbf{P}^{k}, \mathbb{F}, B_{\mathbb{F}}$)
            
            \State $L_{rl}^{k} \gets$ \texttt{Compute-Loss($\mathbb{P}, \mathbf{P}^{k}, \mathbf{F}^{k}$)}

            \State $\frac{\partial \mathbb{E}[L_{rl}]}{\partial \Phi} \gets $ \texttt{Estimate-Gradient}($\Phi, \mathbf{P}^{k}, L_{rl}^{k}$)

            \State $\Phi \gets$ \texttt{Update-Gradient}($\Phi, \frac{\partial \mathbb{E}[L_{rl}]}{\partial \Phi}$)
        \EndWhile

        \State $\mathbb{P}, \Psi \gets$ \texttt{Get-Remaining-Points}($\mathbb{P}, \Psi, \Phi$)
    \EndWhile
\end{algorithmic}
\end{algorithm}

In~\cref{alg:rl-ball}, we formally describe the \emph{Reinforce-Ball} algorithm in detail:

\begin{itemize}
    \item \textbf{Line 1:} In the \emph{Reinforce-Ball} algorithm, we optimize per-point probabilities for $n_{0}$ epochs, with each epoch consisting of $n_1$ optimization steps. In our experiments, we set $n_0 = 10$ and $n_1 = 2000$.
    \item \textbf{Line 2:} We define the number of batches used during optimization as $B$. Increasing $B$ improves the stability of the gradient computation but also increases computational cost. In our experiments, we set $B = 1024$.
    \item \textbf{Line 3:} Initialize the per-point probability of every point to $0.99$, as all points are assumed to exist with high probability before optimization. The probabilities are not set to $1.0$ to avoid every sampled batch (Line 11) including all points, which would prevent optimization from progressing.
    \item \textbf{Lines 4-5:} Perform multiple epochs of optimization.
    \item \textbf{Line 6:} Gather the possibly existing faces ($\mathbb{F}$) based on the current point configuration and their real values. This function is the same as the one used in the Point Optimization step (Appendix~\ref{appendix:pipeline-2-pos-optim}).
    \item \textbf{Line 7:} Compute the minimum bounding ball $B_{\mathbb{F}}$ for the gathered query faces.
    \item \textbf{Lines 8-9:} Perform the optimization steps within the current epoch.
    \item \textbf{Line 10:} Consider $B$ batches, each containing a different point configuration based on the sampled points.
    \item \textbf{Line 11:} For each batch, sample points from $\mathbb{P}$ based on their probabilities $\Phi$. Each point is sampled independently, and the probability of sampling a specific batch is computed as shown in~\cref{eq:rl-sample-point-prob}. The sampled points in the $k$-th batch are denoted as $\mathbf{P}^{k}$.

    \begin{table}[t!]
    \centering
    \scalebox{0.73}{
  \begin{tabular}{l|cccc}
    \toprule
    Method \footnotesize{(hyperparameter)} & CD{\footnotesize ($\times 10^{-6}$)}$\downarrow$ & \# Verts. & \# Edges. & Time (sec)\\
    \midrule
DMesh~\cite{son2024dmesh} (0) & 1.97 & 2506 & 2245 & 30.39 \\
DMesh ($10^{-4}$) & 2.68 & 666 & 693 & 153.10 \\
DMesh ($10^{-3}$) & 12.48 & 456 & 488 & 152.37 \\
    \midrule
DMesh++ (0) & 1.82 & 2862 & 2793 & 11.33 \\
DMesh++ ($10^{-6}$) & 1.86 & 1386 & 1394 & 278.88 \\
DMesh++ ($10^{-5}$) & 2.77 & 149 & 152 & 200.05 \\
    \bottomrule
  \end{tabular}
  }
  \caption{\textbf{Quantitative ablation studies on Reinforce-Ball algorithm.} As we increase $\epsilon_{card}$ (in parenthesis) for DMesh++, we can significantly reduce the mesh complexity without losing geometric details, while DMesh cannot do the same with $\lambda_{weight}$. 
    \label{tab:rlball-font-ablation}
    \vspace{-0.5em}
    }
\end{table}

    \item \textbf{Line 12:} For each batch, determine the existing faces in $\mathbb{F}$ based on the sampled points. Specifically, a face $F$ exists if all its points are included in the sampled points and its $B_{F}$ satisfies the \emph{Minimum-Ball} condition. The existing faces in the $k$-th batch are denoted as $\mathbf{F}^{k}$.
    \item \textbf{Line 13:} For each batch, compute the loss as the sum of the reconstruction loss ($L_{recon}$) and the cardinality loss ($L_{card}$), as discussed in Appendix~\ref{appendix:rl-ball-overview}.
    \item \textbf{Line 14:} Estimate the gradient of the expected loss ($\mathbb{E}[L_{rl}]$) with respect to the per-point probabilities $\Phi$ using the log-derivative trick~\citep{williams1992simple}:
    \begin{equation}
    \label{eq:log-deriv}
        \nabla_{\Phi}\mathbb{E}_{\mathbf{P}\sim \Phi}[L_{rl}] \approx \frac{1}{B} \sum_{i=1}^{B} \nabla_{\Phi}\log P(\mathbf{P^i} | \Phi) \cdot L^{i}_{rl}.
    \end{equation}
    To reduce the variance of the gradient, we normalize $L_{rl}$ across the batch before the computation~\citep{greensmith2004variance}.
    \item \textbf{Line 15:} Update $\Phi$ using the estimated gradients.
    \item \textbf{Line 17:} After completing an epoch, discard points whose probability is below a specified threshold. In our experiments, we set the threshold to $0.5$. The remaining points are used for the next epoch. As points are removed, the query faces updated in Line 6 for the next epoch will span a larger area than in the previous epoch.
\end{itemize}

\subsection{Experimental Results}

Using the Reinforce-Ball algorithm, we can reconstruct efficient 2D meshes from point clouds that adapt to local geometry. As described in~\cref{sec:exp-2d-pc}, we conducted 2D point cloud reconstruction experiments on the font dataset.

In~\cref{tab:rlball-font-ablation}, we present quantitative ablation studies on the \emph{Reinforce-Ball} algorithm. Increasing the tunable hyperparameter $\epsilon_{card}$ (Appendix~\ref{appendix:rl-ball-overview}), which controls regularization strength, leads to a rapid reduction in vertices and edges. For instance, with $\epsilon_{card} = 10^{-5}$, edges decrease by nearly \textit{94\%} with minimal impact on reconstruction quality. DMesh~\citep{son2024dmesh} also offers a tunable parameter, $\lambda_{weight}$, for weight regularization to reduce mesh complexity. However, while edge reduction occurs, DMesh’s reconstruction quality degrades more quickly. At $\epsilon_{card} = 10^{-5}$, our method achieves a similar CD loss to DMesh with $\lambda_{weight} = 10^{-4}$ but uses about \textit{78\%} fewer edges. This advantage is also evident in~\cref{fig:rlball-font-qual}, where our \emph{Reinforce-Ball} algorithm removes redundant edges effectively and adapts the mesh to local geometry. In contrast, DMesh’s edge removal disregards local geometry, resulting in loss of detail.

\begin{figure}[t]
    \centering
    \includegraphics[width=\linewidth]{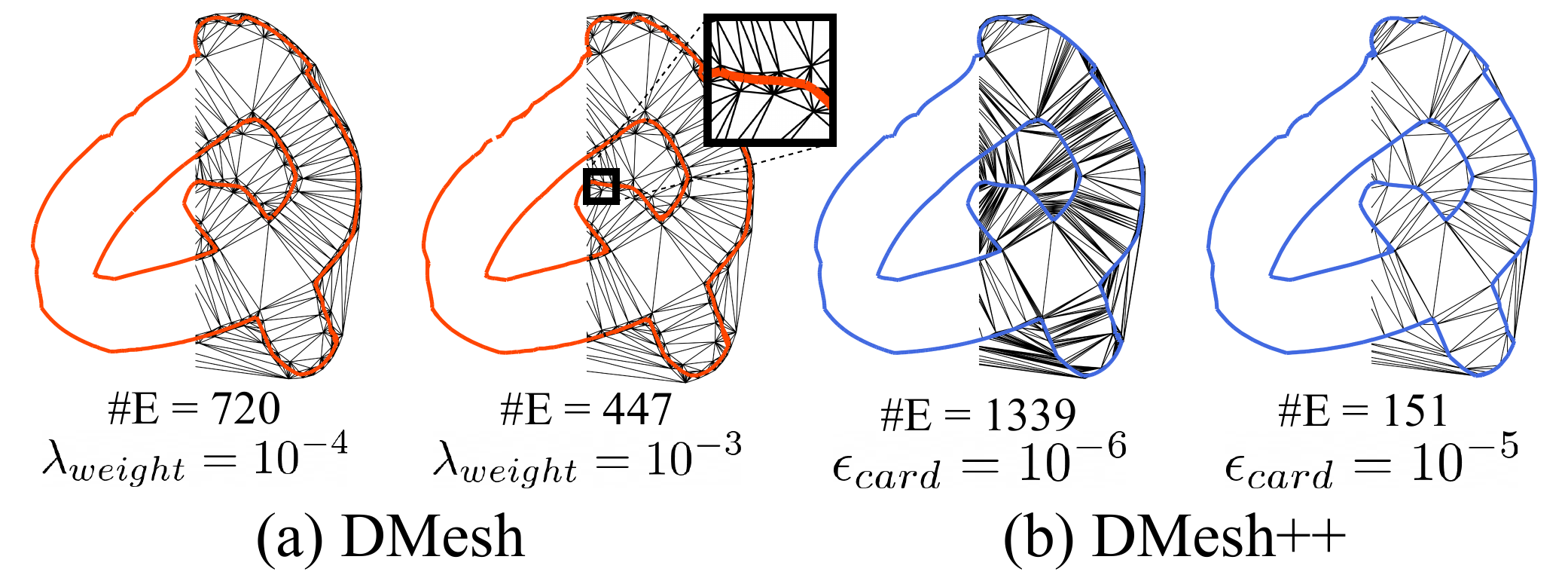}
    \caption{\textbf{Qualitative ablation studies on Reinforce-Ball algorithm (for letter `Q').} We render ``imaginary'' (black) part and ``real part'' (red, blue) together. } 
    \label{fig:rlball-font-qual}
    \vspace{-0.5em}
\end{figure}

Likewise, we successfully highlighted the limitation of DMesh's weight regularization and demonstrated that the \emph{Reinforce-Ball} algorithm can eliminate redundant mesh faces within the DMesh++ framework without sacrificing geometric details. However, since the method is not yet easily extensible to 3D and incurs high computational costs, we include these results in the Appendix.

\end{document}